\documentclass[final,12pt]{arxiv} % Include author names

% The following packages will be automatically loaded:
% amsmath, amssymb, natbib, graphicx, url, algorithm2e

\usepackage[utf8]{inputenc} % allow utf-8 input
\usepackage[T1]{fontenc}    % use 8-bit T1 fonts
\usepackage{hyperref}       % hyperlinks
\usepackage{url}            % simple URL typesetting
\usepackage{booktabs}       % professional-quality tables
\usepackage{amsfonts}       % blackboard math symbols
\usepackage{nicefrac}       % compact symbols for 1/2, etc.
\usepackage{microtype}      % microtypography
\usepackage{xcolor}         % colors

\usepackage{dsfont}
\usepackage{enumitem}
\usepackage[toc,page,header]{appendix}

\usepackage[english]{babel}
\usepackage{amsmath, amssymb}
\usepackage{graphicx}
\usepackage{latexsym}
\usepackage{graphicx}
\usepackage{cases}
\usepackage[mathscr]{euscript}
\usepackage{xcolor}
\usepackage{colortbl}
\usepackage{thmtools}
\usepackage{thm-restate}

\usepackage{mathtools}
\usepackage{subcaption}

\usepackage{pifont}% http://ctan.org/pkg/pifont

\usepackage{multirow}
\usepackage{diagbox}

\usepackage{MnSymbol}
\DeclareMathAlphabet\mathbb{U}{msb}{m}{n}
\usepackage{xpatch}

\usepackage{tikz}
\usetikzlibrary{shapes.geometric, arrows}

\definecolor{Gray}{gray}{0.85}
\newcolumntype{g}{>{\columncolor{Gray}}c}

\hypersetup{
  breaklinks   = true, %splits links across lines
  colorlinks   = true, %Colours links instead of ugly boxes
  urlcolor     = blue, %Colour for external hyperlinks
  linkcolor    = blue, %Colour of internal links
  citecolor    = blue %Colour of citations
}

\def\Rset{\mathbb{R}}

\DeclareMathOperator*{\E}{\mathbb{E}}

\DeclareMathOperator*{\argmax}{\rm argmax}
\DeclareMathOperator*{\argmin}{\rm argmin}

\newcommand{\nrm}[1]{{\left\vert\kern-0.25ex\left\vert\kern-0.25ex\left\vert #1 
    \right\vert\kern-0.25ex\right\vert\kern-0.25ex\right\vert}}

% \declaretheorem{theorem}

\DeclarePairedDelimiter{\abs}{\lvert}{\rvert} 
\DeclarePairedDelimiter{\bracket}{[}{]}
\DeclarePairedDelimiter{\curl}{\{}{\}}
\DeclarePairedDelimiter{\paren}{(}{)}

\newcommand{\sA}{{\mathscr A}}

\newcommand{\sC}{{\mathscr C}}
\newcommand{\sD}{{\mathscr D}}
\newcommand{\sE}{{\mathscr E}}

\newcommand{\sH}{{\mathscr H}}

\newcommand{\sM}{{\mathscr M}}

\newcommand{\sX}{{\mathscr X}}
\newcommand{\sY}{{\mathscr Y}}

\newcommand{\sfL}{{\mathsf L}}

\newcommand{\h}{\widehat}
\newcommand{\ov}{\overline}
\newcommand{\uv}{\underline}

\newcommand{\e}{\epsilon}
\newcommand{\ignore}[1]{}

\newcommand{\hh}{{\sf h}}

\setcounter{tocdepth}{0}
\newcommand{\1}{\mathds{1}}
\newcommand{\ldef}{{\sfL_{\rm{def}}}}
\newcommand{\ldefsc}{{\sfL_{\rm{def}}}}
\newcommand{\lsc}{{\sfL}}
\newcommand{\num}{{n_e}}

\newcommand{\Rad}{\mathfrak R}
\newcommand{\expert}{{g}}
\newcommand{\expertexpert}{{\sf g}}

\definecolor{airforceblue}{rgb}{0.36, 0.54, 0.66}
\definecolor{antiquewhite}{rgb}{0.98, 0.92, 0.84}
\definecolor{cherryblossompink}{rgb}{1.0, 0.72, 0.77}
\definecolor{coralpink}{rgb}{0.97, 0.51, 0.47}
\definecolor{chestnut}{rgb}{0.8, 0.36, 0.36}
\definecolor{darkchampagne}{rgb}{0.76, 0.7, 0.5}
\definecolor{pinegreen}{rgb}{0.0, 0.47, 0.44}

\tikzstyle{startstop1} = [rectangle, rounded corners, 
minimum width=1.6cm, 
minimum height=1cm,
text centered, 
draw=black, 
fill opacity=0.9,
text opacity=1,
fill=chestnut]

\tikzstyle{startstop2} = [rectangle, rounded corners, 
minimum width=1.6cm, 
minimum height=1cm,
text centered, 
draw=black, 
fill opacity=0.2,
text opacity=1,
fill=chestnut]

\tikzstyle{io} = [trapezium, 
trapezium stretches=true, % A later addition
trapezium left angle=70, 
trapezium right angle=110, 
text centered, 
text width=0.8cm, 
% fill opacity=0.9,
% text opacity=1,
draw=black, fill=airforceblue]

\tikzstyle{process} = [rectangle, 
text centered, 
text width=1.5cm, 
draw=black, 
fill=antiquewhite]

\tikzstyle{decision1} = [diamond, aspect=2,
text centered, 
draw=black, 
fill opacity=0.9,
text opacity=1,
fill=pinegreen]

\tikzstyle{decision2} = [diamond, aspect=2,
text centered, 
draw=black, 
fill opacity=0.2,
text opacity=1,
fill=pinegreen]

\tikzstyle{arrow} = [thick,->,>=stealth]

\title[Principled Approaches for Learning to Defer with Multiple Experts]{Principled Approaches for Learning to Defer with Multiple Experts}
\usepackage{times}

% Authors with different addresses:
\arxivauthor{%
 \Name{Anqi Mao} \Email{aqmao@cims.nyu.edu}\\
 \addr Courant Institute of Mathematical Sciences, New York%
 \AND
 \Name{Mehryar Mohri} \Email{mohri@google.com}\\
 \addr Google Research and Courant Institute of Mathematical Sciences, New York%
 \AND
 \Name{Yutao Zhong} \Email{yutao@cims.nyu.edu}\\
 \addr Courant Institute of Mathematical Sciences, New York%
}

\begin{document}

\maketitle

\begin{abstract}

We present a study of surrogate losses and algorithms for the general
problem of \emph{learning to defer with multiple experts}.  We first
introduce a new family of surrogate losses specifically tailored for
the multiple-expert setting, where the prediction and deferral
functions are learned simultaneously. We then prove that these
surrogate losses benefit from strong $\sH$-consistency bounds. We
illustrate the application of our analysis through several examples of
practical surrogate losses, for which we give explicit guarantees.
These loss functions readily lead to the design of new learning to
defer algorithms based on their minimization. While the main focus of this work is a theoretical analysis, we also report the results of several experiments on SVHN and CIFAR-10 datasets. \ignore{We present the results of several experiments with multiple datasets demonstrating the
effectiveness of these algorithms.}

\end{abstract}

% \begin{keywords}%
  
% \end{keywords}

\section{Introduction}
\label{sec:intro}

In many real-world applications, expert decisions can complement or
significantly enhance existing models. These experts may consist of
humans possessing domain expertise or more sophisticated albeit
expensive models. For instance, contemporary language models and
dialog-based text generation systems have exhibited susceptibility to
generating erroneous information, often referred to as
\emph{hallucinations}.  Thus, their response quality can be
substantially improved by deferring uncertain predictions to more
advanced or domain-specific pre-trained models. This particular issue
has been recognized as a central challenge for large language models
(LLMs) \citep{WeiEtAl2022,bubeck2023sparks}.  Similar observations
apply to other generation systems, including image or video
generation, as well as learning models used in various applications
such as image classification, image annotation, and speech
recognition. Thus, the problem of \emph{learning to defer with
multiple experts} has become increasingly critical in applications.

The concept of \emph{learning to defer} can be traced back to the
original work on \emph{learning with rejection} or \emph{abstention}
based on confidence thresholds
\citep{Chow1957,chow1970optimum,HerbeiWegkamp2005,bartlett2008classification,grandvalet2008support,yuan2010classification,WegkampYuan2011,ramaswamy2018consistent,NiCHS19},
rejection or abstention functions
\citep{CortesDeSalvoMohri2016,CortesDeSalvoMohri2016bis,charoenphakdee2021classification,caogeneralizing,CortesDeSalvoMohri2023,cheng2023regression,MaoMohriZhong2024predictor,MaoMohriZhong2024score,MohriAndorChoiCollinsMaoZhong2024learning,li2024no}, or
\emph{selective classification} \citep{el2010foundations,
  wiener2011agnostic,geifman2017selective,
  gangrade2021selective,geifman2019selectivenet}, and other methods
\citep{kalai2012reliable,
  ziyin2019deep,acar2020budget}. In these studies, either the cost of
abstention is not explicit or it is chosen to be a constant.

However, a constant cost does not fully capture all the relevant
information in the deferral scenario. It is important to take into
account the quality of the expert, whose prediction we rely on.  These
may be human experts as in several critical applications
\citep{kamar2012combining,tan2018investigating,kleinberg2018human,
  bansal2021most}. To address this gap, \citet{madras2018learning}
incorporated the human expert's decision into the cost and proposed
the first \emph{learning to defer (L2D)} framework, which has also
been examined in \citep{raghu2019algorithmic,wilder2021learning,
  pradier2021preferential,keswani2021towards}.
\citet{mozannar2020consistent} proposed the first
\emph{Bayes-consistent}
\citep{Zhang2003,bartlett2006convexity,steinwart2007compare} surrogate
loss for L2D, and subsequent work
\citep{raman2021improving,liu2022incorporating} further improved upon
it. Another Bayes-consistent surrogate loss in L2D is the
one-versus-all loss proposed by \citet{verma2022calibrated} that is
also studied in \citep{charusaie2022sample} as a special case of a
general family of loss functions. An additional line of research
investigated post-hoc methods
\citep{okati2021differentiable,narasimhanpost}, where
\citet{okati2021differentiable} proposed an alternative optimization
method between the predictor and rejector, and \citet{narasimhanpost}
provided a correction to the surrogate losses in
\citep{mozannar2020consistent, verma2022calibrated} when they are
underfitting. Finally, L2D or its variants have been adopted or studied in various other scenarios \citep{de2020regression,straitouri2021reinforcement,
  zhao2021directing,joshi2021pre,gao2021human,
  mozannar2022teaching,liu2022incorporating,
  hemmer2023learning,narasimhan2023learning,cao2023defense,chen2024learning}.
  
All the studies mentioned so far mainly focused on learning to defer
with a single expert. Most recently, \citet{verma2023learning}
highlighted the significance of \emph{learning to defer with multiple
experts}
\citep{hemmer2022forming,keswani2021towards,kerrigan2021combining,
  straitouri2022provably,benz2022counterfactual} and extended the
surrogate loss in \citep{verma2022calibrated,mozannar2020consistent}
to accommodate the multiple-expert setting, which is currently the
only work to propose Bayes-consistent surrogate losses in this
scenario. They further showed that a mixture of experts (MoE) approach
to multi-expert L2D proposed in \citep{hemmer2022forming} is not
consistent. More recently, \citet{mao2023two} examined a two-stage scenario for learning to defer with multiple experts, which is crucial for various applications. They developed new surrogate losses for this scenario and demonstrated that these are supported by stronger consistency guarantees—specifically, $\sH$-consistency bounds as introduced below—implying their Bayes consistency. \citet{mao2024regression} first examined the setting of regression with multiple experts, where they provided novel surrogate losses and their $\sH$-consistency bounds guarantees.

Meanwhile, recent work by \citet{awasthi2022Hconsistency,
  AwasthiMaoMohriZhong2022multi} introduced new consistency
guarantees, called $\sH$-consistency bounds, which they argued are
more relevant to learning than Bayes-consistency since they are
hypothesis set-specific and non-asymptotic. $\sH$-consistency bounds
are also stronger guarantees than Bayes-consistency. They established
$\sH$-consistent bounds for common surrogate losses in standard
classification (see also \citep{mao2023cross,zheng2023revisiting,MaoMohriZhong2023characterization}). This naturally
raises the question: can we design deferral surrogate losses that
benefit from these more significant consistency guarantees?

\textbf{Our contributions.} We study the general framework of learning
to defer with multiple experts.
We first introduce a new family of surrogate losses specifically
tailored for the multiple-expert setting, where the prediction and
deferral functions are learned simultaneously
(Section~\ref{sec:general-surrogate-losses}). Next, we prove that
these surrogate losses benefit from $\sH$-consistency bounds
(Section~\ref{sec:H-consistency-bounds}).
This implies, in particular, their Bayes-consistency.
We illustrate the application of our analysis through several examples
of practical surrogate losses, for which we give explicit guarantees.
These loss functions readily lead to the design of new learning to
defer algorithms based on their minimization. 
Our $\sH$-consistency bounds incorporate a crucial term known as the
\emph{minimizability gap}.  We show that this makes them more
advantageous guarantees than bounds based on the approximation error
(Section~\ref{sec:minimizability_gaps}).
We further demonstrate that our $\sH$-consistency bounds can be used
to derive generalization bounds for the minimizer of a surrogate loss
expressed in terms of the minimizability gaps
(Section~\ref{sec:learning-bound}). While the main focus of this work is a theoretical analysis, we also report the results of several experiments with SVHN and CIFAR-10 datasets (Section~\ref{sec:experiments}).
\ignore{Finally, we present the results of several experiments with multiple
datasets demonstrating the effectiveness of these algorithms
(Section~\ref{sec:experiments}).}

We give a more detailed discussion of related work in
Appendix~\ref{app:related-work}. We start with the introduction of
preliminary definitions and notation needed for our discussion of the
problem of learning to defer with multiple experts.

\section{Preliminaries}
\label{sec:pre}

\begin{figure}[t]
    \centering
    \resizebox{\columnwidth}{!}{
    \begin{tikzpicture}[node distance=2cm]
    \node (x) [io] {input $x \in \sX$};
    \node (h) [process, right of=x, xshift=0.2cm] {hypothesis $h \in \sH$};
    
    \node (h3) [decision1, right of=h, xshift=2cm] {predict 3};
    \node (h2) [decision1, above of=h3, yshift=-0.6cm] {predict 2};
    \node (h1) [decision1, above of=h2, yshift=-0.6cm] {predict 1};
    \node (h4) [decision2, below of=h3, yshift=0.6cm] {expert $\expert_1$};
    \node (h5) [decision2, below of=h4, yshift=0.6cm] {expert $\expert_2$};
    
    \node (c1) [startstop1, right of=h1, xshift=1cm] {$\1_{1\neq y}$};
    \node (c2) [startstop1, right of=h2, xshift=1cm] {$\1_{2\neq y}$};
    \node (c3) [startstop1, right of=h3, xshift=1cm] {$\1_{3\neq y}$};
    \node (c4) [startstop2, right of=h4, xshift=1cm] {$c_1(x, y)$};
    \node (c5) [startstop2, right of=h5, xshift=1cm] {$c_2(x, y)$};
    \node (loss) [above of=c1, yshift=-1.2cm] {incur loss};
    
    \node (y) [io, right of=c3, xshift=1cm] {label $y \in \sY$};
    
    \draw [arrow] (x) -- (h);
    
    \draw [arrow] (h) -- node[sloped, anchor=center, above, yshift=-0.1cm] {$\hh(x) = 1$} (h1);
    \draw [arrow] (h) -- node[sloped, anchor=center, above, yshift=-0.1cm] {$\hh(x) = 2$} (h2);
    \draw [arrow] (h) -- node[sloped, anchor=center, above, yshift=-0.1cm] {$\hh(x) = 3$} (h3);
    \draw [arrow] (h) -- node[sloped, anchor=center, above, yshift=-0.1cm] {$\hh(x) = 4$} (h4);
    \draw [arrow] (h) -- node[sloped, anchor=center, above, yshift=-0.1cm] {$\hh(x) = 5$} (h5);
    
    \draw [arrow] (h1) -- (c1);
    \draw [arrow] (h2) -- (c2);
    \draw [arrow] (h3) -- (c3);
    \draw [arrow] (h4) -- (c4);
    \draw [arrow] (h5) -- (c5);
    
    \draw [arrow] (y) -- (c1);
    \draw [arrow] (y) -- (c2);
    \draw [arrow] (y) -- (c3);
    \draw [arrow] (y) -- (c4);
    \draw [arrow] (y) -- (c5);
    \end{tikzpicture}}    
    \caption{Illustration of the scenario of learning to defer with
      multiple experts ($n=3$ and $\num=2$).}
    \label{fig:deferral}
    \vskip -0.14in
\end{figure}
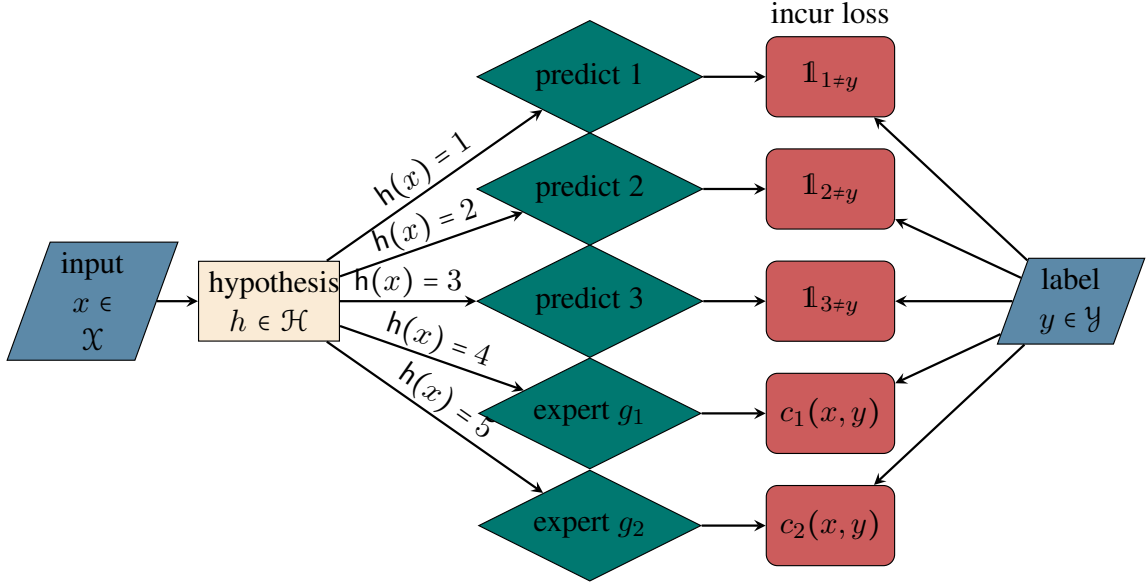

We consider the standard multi-class classification setting with an
input space $\sX$ and a set of $n \geq 2$ labels $\sY =[n]$, where we
use the notation $[n]$ to denote the set $\curl*{1, \ldots, n}$.
We study the scenario of \emph{learning to defer with multiple experts}, where the label set
$\sY$ is augmented with $\num$ additional labels $\curl*{n + 1,
  \ldots, n + \num}$ corresponding to $\num$ pre-defined experts $\expert_1,
\ldots, \expert_\num$, which are a series of functions mapping from $\sX\times \sY$ to $\Rset$.  In this scenario, the learner has the option of
returning a label $y \in \sY$, which represents the category
predicted, or a label $y = n + j$, $1 \leq j \leq \num$, in which case it is
\emph{deferring} to expert $\expert_j$.

We denote by $\ov \sY = [n + \num]$ the augmented label set and
consider a hypothesis set $ \sH$ of functions mapping from $\sX \times
\ov \sY$ to $\Rset$. The prediction associated by $h \in \sH$ to an
input $x \in \sX$ is denoted by $\hh(x)$ and defined as the element in
$\ov \sY$ with the highest score, $\hh(x) = \argmax_{y \in [n + \num]}
h(x, y)$, with an arbitrary but fixed deterministic strategy for
breaking ties. We denote by $\sH_{\rm{all}}$ the family of all
measurable functions.

The \emph{deferral loss function}
$\ldefsc$ is defined as follows for any $h \in \sH$ and $(x, y) \in
\sX \times \sY$:
\begin{equation}
\label{eq:def-score}
\ldefsc(h, x, y)
= \1_{\hh(x)\neq y} \1_{\hh(x)\in [n]}
+ \sum_{j = 1}^{\num} c_j(x, y) \1_{\hh(x) = n + j}
\end{equation}
Thus, the loss incurred coincides with the standard zero-one
classification loss when $\hh(x)$, the label predicted, is in $\sY$.
Otherwise, when $\hh(x)$ is equal to $n + j$, the loss incurred is
$c_j(x, y)$, the cost of deferring to expert $\expert_j$. We give an illustration of the scenario of learning to defer with three classes and two experts ($n=3$ and $\num=2$) in Figure~\ref{fig:deferral}. We will denote by
$\uv c_j \geq 0$ and $\ov c_j \leq 1$ finite lower and upper bounds on
the cost $c_j$, that is $c_j(x, y)\in [\uv c_j,\ov c_j]$ for all $(x,
y) \in \sX \times \sY$.
There are many possible choices for these costs. Our analysis is
general and requires no assumption other than their boundedness.
One natural choice is to define cost $c_j$ as a function relying on expert
$\expert_j$'s accuracy, for example
$c_j(x, y) = \alpha_j \1_{\expertexpert_{j}(x) \neq y} +
\beta_j$, with $\alpha_j, \beta_j > 0$, where $\expertexpert_{j}(x)=\argmax_{y\in [n]}\expert_j(x,y)$ is the
prediction made by expert $\expert_j$ for input $x$.

Given a distribution $\sD$ over $\sX \times \sY$, we will denote by
$\sE_{\ldefsc}(h)$ the expected deferral loss of a hypothesis $h \in
\sH$,
\begin{equation}
\sE_{\ldefsc}(h) = \E_{(x, y) \sim \sD}[\ldefsc(h, x, y)],
\end{equation}
and
by $\sE^*_{\ldefsc}(\sH) = \inf_{h \in \sH} \sE_{\ldefsc}(h)$ its
infimum or best-in-class expected loss. We will adopt similar
definitions for any surrogate loss function $\lsc$: 
\begin{equation}
\sE_{\lsc}(h) = \E_{(x, y) \sim \sD}[\lsc(h, x, y)],\quad\sE^*_{\lsc}(\sH) = \inf_{h \in \sH} \sE_{\lsc}(h).
\end{equation}

\section{General surrogate losses}
\label{sec:general-surrogate-losses}

In this section, we introduce a new family of surrogate losses
specifically tailored for the multiple-expert setting starting from
first principles.

The scenario we consider is one where the
prediction (first $n$ scores) and deferral functions (last $\num$
scores) are learned simultaneously.
Consider a hypothesis $h \in \sH$. Note that, for any $(x, y) \in \sX
\times \sY$, if the learner chooses to defer to an expert, $\hh(x) \in \curl*{n + 1, \ldots, n + \num}$, then it does not make a prediction of the category, and thus
$\hh(x) \neq y$\ignore{and $\hh(x) = n + j$, for some $j \in [\num]$}. This implies that
the following identity holds: 
\[\1_{\hh(x)\neq y} \1_{\hh(x) \in
  \curl*{n + 1,\ldots, n + \num}} = \1_{\hh(x) \in \curl*{n +
    1,\ldots, n + \num}}.\] 
Using this identity and $\1_{\hh(x)\in [n]} = 1 - \1_{\hh(x)
  \in \curl*{n + 1,\ldots, n + \num}}$, we can write the first term of \eqref{eq:def-score} as $\1_{\hh(x)\neq y} - \1_{\hh(x) \in \curl*{n +
    1,\ldots, n + \num}}$. Note that deferring occurs if and only if one of the experts is selected, that is $\1_{\hh(x) \in \curl*{n + 1,\ldots, n + \num}}
  = \sum_{j = 1}^{\num}\1_{\hh(x) = n + j}$. Therefore, the deferral loss function can be written in the following form for any $h \in \sH$ and $(x, y) \in \sX \times \sY$:
\begin{align*}
\ldef(h, x, y)
& = \1_{\hh(x)\neq y} - \sum_{j = 1}^{\num}\1_{\hh(x) = n + j} + \sum_{j = 1}^{\num} c_j(x, y) \1_{\hh(x) = n + j}\\
& = \1_{\hh(x)\neq y} + \sum_{j = 1}^{\num} \paren*{c_j(x, y) - 1} \1_{\hh(x) = n + j}\\
& =  \1_{\hh(x)\neq y}
+ \sum_{j = 1}^{\num} \paren*{1 - c_j(x, y)} \1_{\hh(x) \neq n + j}
+ \sum_{j = 1}^{\num} \paren*{c_j(x, y) - 1}.
\end{align*}
In light of this expression, since the last term $\sum_{j = 1}^{\num}
\paren*{c_j(x, y) - 1}$ does not depend on $h$, if $\ell$ is a
surrogate loss for the zero-one multi-class classification loss over the augmented label set $\ov \sY$, then $\lsc$, defined as follows for any $h \in \sH$ and $(x, y) \in \sX \times \sY$,
is a natural surrogate loss for $\ldefsc$:
\begin{equation}
\label{eq:sur-score}
\lsc(h, x, y)
=  \ell \paren*{h, x, y} + \sum_{j = 1}^{\num} \paren*{1 - c_j(x, y)} \,  \ell\paren*{h, x, n + j}.
\end{equation}
We will study the properties of the general family of surrogate losses
$\lsc$ thereby defined. Note that in the special case where $\ell$ is
the logistic loss and $\num = 1$, that is where there is only one
pre-defined expert, $\lsc$ coincides with the surrogate loss proposed
in \citep{mozannar2020consistent,caogeneralizing,MaoMohriZhong2024score}. However, even for
that special case, our derivation of the surrogate loss from first
principle is new and it is this analysis that enables us to define a
surrogate loss for the more general case of multiple experts and other
$\ell$ loss functions. Our formulation also recovers the softmax
surrogate loss in \citep{verma2023learning} when
$\ell=\ell_{\rm{log}}$ and $c_j(x,y)=1_{\expertexpert_j(x)\neq y}$.

\section{$\sH$-consistency bounds for surrogate losses}
\label{sec:H-consistency-bounds}

Here, we prove strong consistency guarantees for a surrogate deferral
loss $\lsc$ of the form described in the previous section, provided
that the loss function $\ell$ it is based upon admits a similar
consistency guarantee with respect to the standard zero-one
classification loss.

\textbf{$\sH$-consistency bounds.} To do so, we will adopt the notion
of \emph{$\sH$-consistency bounds} recently introduced by
\citet*{awasthi2022Hconsistency,AwasthiMaoMohriZhong2022multi} and also studied in \citep{awasthi2021calibration,awasthi2021finer,AwasthiMaoMohriZhong2023theoretically,awasthi2024dc,mao2023cross,MaoMohriZhong2023ranking,MaoMohriZhong2023rankingabs,zheng2023revisiting,MaoMohriZhong2023characterization,MaoMohriZhong2023structured,mao2024top,mao2024h}. These
are guarantees that, unlike Bayes-consistency or excess error bound,
take into account the specific hypothesis set $\sH$ and do not assume
$\sH$ to be the family of all measurable functions. Moreover, in
contrast with Bayes-consistency, they are not just asymptotic
guarantees. In this context, they have the following form:
$\sE_{\ldefsc}(h) - \sE_{\ldefsc}^*(\sH) \leq f\paren*{\sE_{\lsc}(h) -
  \sE_{\lsc}^*(\sH)}$, where $f$ is a non-decreasing function,
typically concave.  Thus, when the surrogate estimation loss
$\paren*{\sE_{\lsc}(h) - \sE_{\lsc}^*(\sH)}$ is reduced to $\e$, the
deferral estimation loss $\paren*{\sE_{\ldefsc}(h) -
  \sE_{\ldefsc}^*(\sH)}$ is guaranteed to be at most $f(\e)$.

\textbf{Minimizability gaps.} A key quantity appearing in these bounds
is the \emph{minimizability gap} $\sM_{\ell}(\sH)$ which, for a loss
function $\ell$ and hypothesis set $\sH$, measures the difference of
the best-in-class expected loss and the expected pointwise infimum of
the loss:
\[
\sM_{\ell}(\sH) = \sE^*_{\ell}(\sH) -
\E_x\bracket[\big]{\inf_{h \in \sH} \E_{y | x} \bracket*{\ell(h, x,
    y)}}.
\]
By the super-additivity of the infimum, since $\sE^*_{\ell}(\sH) =
\inf_{h \in \sH} \E_x\bracket[\big]{\E_{y | x} \bracket*{\ell(h, x,
    y)}}$, the minimizability gap is always non-negative.

When the loss function $\ell$ only depends on $h(x,\cdot)$ for all
$h$, $x$, and $y$, that is $\ell(h, x, y) = \Psi(h(x,1), \ldots,
h(x,n), y)$, for some function $\Psi$, then it is not hard to show
that the minimizability gap vanishes for the family of all measurable
functions: $\sM_{\ell}(\sH_{\rm{all}}) = 0$
\citep{steinwart2007compare}[lemma~2.5]. It is also null when
$\sE^*_{\ell}(\sH) = \sE^*_{\ell}(\sH_{\rm{all}})$, that is when
the Bayes-error coincides with the best-in-class error. In general,
however, the minimizabiliy gap is non-zero for a restricted hypothesis
set $\sH$ and is therefore important to analyze.  
In Section~\ref{sec:minimizability_gaps}, we will discuss in more
detail minimizability gaps for a relatively broad case and demonstrate
that $\sH$-consistency bounds with minimizability gaps can often be
more favorable than excess error bounds based on the approximation
error.

The following theorem is the main result of this section.

\begin{restatable}[\textbf{$\sH$-consistency bounds for
      score-based surrogates}]{theorem}{BoundScore}
\label{Thm:bound-score}
Assume that $\ell$ admits an $\sH$-consistency bound with respect to
the multi-class zero-one classification loss $\ell_{0-1}$. Thus, there
exists a non-decreasing concave function $\Gamma$ with $\Gamma(0)=0$
such that, for any distribution $\sD$ and for all $h \in \sH$, we have
\begin{equation*}
\sE_{\ell_{0-1}}(h) - \sE_{\ell_{0-1}}^*( \sH) + \sM_{\ell_{0-1}}( \sH)
\leq \Gamma\paren*{\sE_{\ell}(h)-\sE_{\ell}^*( \sH) + \sM_{\ell}(\sH)}.
\end{equation*}
Then, $\lsc$ admits the following $ \sH$-consistency bound with
respect to $\ldefsc$: for all $h\in \sH$,
\begin{equation}
\label{eq:H-consistency-bounds}
\sE_{\ldefsc}(h) - \sE_{\ldefsc}^*( \sH) + \sM_{\ldefsc}( \sH)
\leq \paren[\bigg]{\num + 1 - \sum_{j = 1}^{\num}\uv c_j} \Gamma\paren*{\frac{\sE_{\lsc}(h) - \sE_{\lsc}^*( \sH) + \sM_{\lsc}( \sH)}{\num + 1-\sum_{j = 1}^{\num}\ov c_j}}.
\end{equation}
Furthermore, constant factors $\paren*{\num + 1 - \sum_{j = 1}^{\num}\uv
  c_j}$ and $\frac{1}{\num + 1 - \sum_{j = 1}^{\num}\ov c_j}$ can be
removed when $\Gamma$ is linear.
\end{restatable}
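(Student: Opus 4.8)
The plan is to reduce the statement to the assumed $\sH$-consistency bound for $\ell$, exploiting that, conditioned on each $x$, both $\ldefsc$ and its surrogate $\lsc$ are a \emph{weighted} zero-one loss and a weighted $\ell$ loss over the augmented label set $\ov\sY$, sharing the same weights. For any loss, with conditional expected value denoted $\cC(h,x)$, recall that its estimation error plus minimizability gap equals the expected pointwise regret, $\sE(h)-\sE^*(\sH)+\sM(\sH)=\E_x[\cC(h,x)-\inf_{h'\in\sH}\cC(h',x)]$; this lets me work entirely with pointwise regrets and sidesteps the global best-in-class terms. Using the rewriting of $\ldefsc$ established above, I would define weights $q_y(x)=\Pr[y\mid x]$ for $y\in[n]$ and $q_{n+j}(x)=\E_{y\mid x}[1-c_j(x,y)]$ for $j\in[\num]$, and verify that $\cC_{\ldefsc}(h,x)=\sum_{y\in\ov\sY}q_y(x)\,\1_{\hh(x)\neq y}$ up to an $h$-independent additive term (which cancels in the regret), while $\cC_{\lsc}(h,x)=\sum_{y\in\ov\sY}q_y(x)\,\ell(h,x,y)$ exactly. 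The key point is that both conditional objects use the \emph{same} weights $q_y(x)$.

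Next I would normalize. Set $Q(x)=\sum_{y\in\ov\sY}q_y(x)=1+\sum_{j=1}^{\num}\E_{y\mid x}[1-c_j(x,y)]$ and $\tl q_y(x)=q_y(x)/Q(x)$, a genuine conditional distribution over $\ov\sY$ (well defined since $c_j\le\ov c_j\le 1$ forces $Q(x)\ge \num+1-\sum_j\ov c_j\ge 1>0$). Since each conditional loss equals $Q(x)$ times the corresponding loss computed under $\tl q(\cdot\mid x)$, and $Q(x)>0$ is a fixed multiplier for each $x$, the pointwise infima factor through and the regrets scale identically: the deferral regret equals $Q(x)$ times the $\ell_{0-1}$ regret under $\tl q$, and the surrogate regret equals $Q(x)$ times the $\ell$ regret under $\tl q$. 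I would then introduce the distribution $\sD'$ on $\sX\times\ov\sY$ whose conditional at $x$ is $\tl q(\cdot\mid x)$ and whose $\sX$-marginal is the $\sD$-marginal reweighted by $Q(x)/Z$, where $Z=\E_x[Q(x)]>0$. Under this change of measure, the identity of the first paragraph turns the deferral estimation-loss-plus-gap into $Z$ times the $\ell_{0-1}$ quantity under $\sD'$, and the surrogate quantity into $Z^{-1}$ times the $\ell$ quantity under $\sD'$. Applying the hypothesis to $\sD'$ (the assumed bound holds for \emph{every} distribution) then gives $\sE_{\ldefsc}(h)-\sE_{\ldefsc}^*(\sH)+\sM_{\ldefsc}(\sH)\le Z\,\Gamma(a/Z)$, with $a=\sE_{\lsc}(h)-\sE_{\lsc}^*(\sH)+\sM_{\lsc}(\sH)\ge 0$.

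Finally I would eliminate the distribution-dependent $Z$ using only properties of $\Gamma$. From $\uv c_j\ge 0$ and $\ov c_j\le 1$ one gets $\num+1-\sum_j\ov c_j\le Q(x)\le \num+1-\sum_j\uv c_j$, hence the same bounds for $Z$. Since $\Gamma$ is concave with $\Gamma(0)=0$, the map $s\mapsto\Gamma(s)/s$ is nonincreasing, equivalently the perspective $t\mapsto t\,\Gamma(a/t)$ is nondecreasing in $t>0$; this lets me replace $Z$ by its upper bound $\num+1-\sum_j\uv c_j$. Then monotonicity of $\Gamma$ lets me enlarge the argument by replacing $Z$ in the denominator with the smaller lower bound $\num+1-\sum_j\ov c_j$, yielding exactly \eqref{eq:H-consistency-bounds}. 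When $\Gamma$ is linear, $t\,\Gamma(a/t)=\Gamma(a)$ is constant in $t$, so no constant factors are incurred and the sharper statement follows. The step I expect to demand the most care is the change of measure: I must check that the pointwise infima (hence the minimizability gaps) transform correctly, i.e.\ that $\inf_{h'}\cC_{\ldefsc}(h',x)$ and $\inf_{h'}\cC_{\lsc}(h',x)$ genuinely factor as $Q(x)$ times the $\tl q$-conditional infima --- which is exactly where the common-weights observation and the strict positivity $Q(x)>0$, $Z>0$ enter.
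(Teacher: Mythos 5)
Your proposal is correct, and although its groundwork coincides with the paper's, the way you invoke the hypothesis is genuinely different. The shared part: your weights $q_y(x)$, the identities expressing the conditional deferral loss as a weighted zero-one loss over $\ov\sY$ (up to an $h$-independent additive term) and the conditional surrogate loss as the same-weighted $\ell$-loss, the normalization $\tl q = q/Q$, and the closing monotonicity argument (concavity of $\Gamma$ with $\Gamma(0)=0$ makes $t\mapsto t\,\Gamma(a/t)$ nondecreasing, so $Z$ can be replaced by $\num+1-\sum_j \uv c_j$ outside $\Gamma$ and by $\num+1-\sum_j \ov c_j$ inside) are precisely the content of Lemmas~\ref{lemma:calibration_gap_score}, \ref{lemma:surrogate_calibration_gap_score} and \ref{lemma:explicit_assumption_01_q} and of the paper's final inequalities. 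The genuinely different step is how the assumption on $\ell$ is applied and how the aggregation over $x$ is performed: the paper applies the hypothesis \emph{pointwise}, as the conditional-regret inequality $\Delta\sC_{\ell_{0-1}}(h,x)\le \Gamma\paren*{\Delta\sC_{\ell,\sH}(h,x)}$ under the conditional $\ov q(x,\cdot)$ (justified via the general characterization theorems of Awasthi et al.\ that the paper cites), and then passes to expectations with Jensen's inequality; you instead build a single tilted distribution $\sD'$ (marginal reweighted by $Q(x)/Z$, conditional $\tl q(\cdot\mid x)$) and apply the assumed bound once, at the distribution level, exactly as stated. Your route buys something concrete: it needs no Jensen step and no pointwise reformulation of the hypothesis, and it treats the $x$-dependence of the normalizer exactly. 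Indeed, the paper's proof refers to $Q$ as ``a constant'' even though $Q$ depends on $x$, so its Jensen step as written implicitly requires either the joint concavity of the perspective $(u,v)\mapsto v\,\Gamma(u/v)$ or a pointwise bounding of $Q$ before taking expectations; your change of measure sidesteps this issue entirely, at the modest cost of checking that $\sD'$ is well defined, which your observation $Q(x)\ge \num+1-\sum_j\ov c_j\ge 1>0$ (plus boundedness and measurability of $Q$) supplies, and of verifying that the pointwise infima scale through the positive factor $Q(x)$, which they do. What the paper's route buys in exchange is the reusable pointwise regret inequality, the standard tool in this literature. The only slip in your write-up is notational: you say the change of measure turns ``the surrogate quantity into $Z^{-1}$ times the $\ell$ quantity under $\sD'$,'' when it is the $\ell$ quantity under $\sD'$ that equals $Z^{-1}$ times the surrogate quantity under $\sD$; the inequality you then derive, $\sE_{\ldefsc}(h)-\sE^*_{\ldefsc}(\sH)+\sM_{\ldefsc}(\sH)\le Z\,\Gamma(a/Z)$, is nonetheless the right one, and the linear case follows as you say since $t\,\Gamma(a/t)$ is then constant in $t$, yielding \eqref{eq:H-consistency-bounds} without the constant factors.
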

The proof is given in Appendix~\ref{app:score}. It consists of first
analyzing the conditional regret of the deferral loss and that of a
surrogate loss. Next, we show how the former can be upper bounded in
terms of the latter by leveraging the $\sH$-consistency bound of
$\ell$ with respect to the zero-one loss with an appropriate
conditional distribution that we construct.  This, combined with the
results of \citet{AwasthiMaoMohriZhong2022multi}, proves our
$\sH$-consistency bounds.

Let us emphasize that the theorem is broadly applicable and that there
are many choices for the surrogate loss $\ell$ meeting the assumption
of the theorem: \citet{AwasthiMaoMohriZhong2022multi} showed that a
variety of surrogate loss functions $\ell$ admit an $\sH$-consistency
bound with respect to the zero-one loss for common hypothesis sets
such as linear models and multi-layer neural networks, including
\emph{sum losses} \citep{weston1998multi}, \emph{constrained losses}
\citep{lee2004multicategory}, and, as shown more recently by
\citet{mao2023cross} (see also \citep{zheng2023revisiting,MaoMohriZhong2023characterization}),
\emph{comp-sum losses}, which include the logistic loss
\citep{Verhulst1838,Verhulst1845,Berkson1944,Berkson1951}, the
\emph{sum-exponential loss} and many other loss functions.

Thus, the theorem gives a strong guarantee for a broad family of
surrogate losses $\lsc$ based upon such loss functions $\ell$.  The
presence of the minimizability gaps in these bounds is important.  In
particular, while the minimizability gap can be upper bounded by the
approximation error $\sA_{\ell}(\sH)= \sE^*_{\ell}(\sH) -
\E_x\bracket[\big]{\inf_{h \in \sH_{\rm{all}}} \E_{y | x}
  \bracket*{\ell(h, x, y)}} =
\sE^*_{\ell}(\sH)-\sE^*_{\ell}(\sH_{\rm{all}})$, it is a finer
quantity than the approximation error and can lead to more favorable
guarantees.

Note that when the Bayes-error coincides with the best-in-class error,
$\sE^*_{\sfL}(\sH) = \sE^*_{\sfL}(\sH_{\rm{all}})$, we have
$\sM_{\lsc}( \sH)\leq \sA_{\lsc}( \sH) = 0$.  This leads to the
following corollary, using the non-negativity property of the
minimizability gap.
\begin{corollary}
\label{cor:bound-score}
Assume that $\ell$ admits an $\sH$-consistency bound with respect to
the multi-class zero-one classification loss $\ell_{0-1}$. Then, for
all $h\in \sH$ and any distribution such that
$\sE^*_{\sfL}(\sH)=\sE_{\sfL}^*(\sH_{\rm{all}})$, the following bound holds:
\begin{equation*}
\sE_{\ldefsc}(h) - \sE_{\ldefsc}^*( \sH)
\leq \paren[\bigg]{\num + 1 - \sum_{j = 1}^{\num}\uv c_j} \Gamma\paren*{\frac{\sE_{\lsc}(h) - \sE_{\lsc}^*( \sH)}{\num + 1-\sum_{j = 1}^{\num}\ov c_j}},
\end{equation*}
Furthermore, constant factors $\paren*{\num + 1 - \sum_{j = 1}^{\num}\uv c_j}$ and $\frac{1}{\num + 1-\sum_{j = 1}^{\num}\ov c_j}$ can be removed 
when $\Gamma$ is linear.
\end{corollary}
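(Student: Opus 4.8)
The plan is to obtain this directly from Theorem~\ref{Thm:bound-score} by showing that, under the stated assumption on the distribution, both minimizability gaps appearing in~\eqref{eq:H-consistency-bounds} can be eliminated. The theorem already supplies the bound~\eqref{eq:H-consistency-bounds} for every $h \in \sH$, so the only work is to control the two gap terms $\sM_{\ldefsc}(\sH)$ and $\sM_{\lsc}(\sH)$; everything else is inherited.

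First I would argue that $\sM_{\lsc}(\sH) = 0$. Recall that the minimizability gap is bounded above by the approximation error, $\sM_{\lsc}(\sH) \leq \sA_{\lsc}(\sH)$: this holds because $\sH \subseteq \sH_{\rm{all}}$ forces the pointwise infimum $\inf_{h \in \sH}\E_{y \mid x}\bracket*{\lsc(h,x,y)}$ to dominate the corresponding infimum over $\sH_{\rm{all}}$, whose expectation equals $\sE_{\lsc}^*(\sH_{\rm{all}})$, so that subtracting the larger quantity $\E_x\bracket*{\inf_{h \in \sH}\E_{y \mid x}\bracket*{\lsc(h,x,y)}}$ yields the smaller difference. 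Under the hypothesis $\sE^*_{\sfL}(\sH)=\sE_{\sfL}^*(\sH_{\rm{all}})$ the approximation error $\sA_{\lsc}(\sH)=\sE^*_{\lsc}(\sH)-\sE^*_{\lsc}(\sH_{\rm{all}})$ vanishes, hence $\sM_{\lsc}(\sH) \leq 0$; combined with the non-negativity of the minimizability gap recorded in the excerpt, this gives $\sM_{\lsc}(\sH)=0$. Substituting this into the argument of $\Gamma$ removes the gap from the right-hand side.

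Next I would discard the left-hand gap. Since $\sM_{\ldefsc}(\sH) \geq 0$, dropping it only weakens the inequality: $\sE_{\ldefsc}(h) - \sE_{\ldefsc}^*(\sH) \leq \sE_{\ldefsc}(h) - \sE_{\ldefsc}^*(\sH) + \sM_{\ldefsc}(\sH)$, and the right member is already bounded by the (now gap-free) expression $\paren*{\num + 1 - \sum_{j=1}^{\num}\uv c_j}\,\Gamma\paren*{\frac{\sE_{\lsc}(h) - \sE_{\lsc}^*(\sH)}{\num + 1 - \sum_{j=1}^{\num}\ov c_j}}$. This is exactly the claimed inequality. The clause about linear $\Gamma$ transfers verbatim from the corresponding statement of Theorem~\ref{Thm:bound-score}, since removing the constant factors there and setting $\sM_{\lsc}(\sH)=0$ commute.

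There is no substantive obstacle: the corollary is a specialization of the main theorem, and the argument is pure bookkeeping with the non-negativity and monotonicity of the minimizability gap. The only point meriting a moment of care is the inequality $\sM_{\lsc}(\sH)\le \sA_{\lsc}(\sH)$ together with the identification of $\E_x\bracket*{\inf_{h \in \sH_{\rm{all}}}\E_{y \mid x}\bracket*{\lsc(h,x,y)}}$ with $\sE_{\lsc}^*(\sH_{\rm{all}})$, which is precisely where the distributional assumption enters; the remaining steps are immediate.
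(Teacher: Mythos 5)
Your proposal is correct and follows essentially the same route as the paper: the paper likewise derives the corollary from Theorem~\ref{Thm:bound-score} by noting that under the assumption $\sE^*_{\lsc}(\sH)=\sE^*_{\lsc}(\sH_{\rm{all}})$ one has $\sM_{\lsc}(\sH) \leq \sA_{\lsc}(\sH) = 0$, hence $\sM_{\lsc}(\sH)=0$ by non-negativity, and then drops the non-negative term $\sM_{\ldefsc}(\sH)$ from the left-hand side. One minor imprecision in your wording: the identification $\E_x\bracket[\big]{\inf_{h \in \sH_{\rm{all}}}\E_{y \mid x}\bracket*{\lsc(h,x,y)}} = \sE_{\lsc}^*(\sH_{\rm{all}})$ is a distribution-independent property of $\lsc$ (it depends on $h$ only through $h(x,\cdot)$), while the distributional assumption enters only in making $\sA_{\lsc}(\sH)$ vanish—but this does not affect the validity of your argument.
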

Thus, when the estimation error of the surrogate loss, $\sE_{\lsc}(h)
- \sE_{\lsc}^*(\sH)$, is reduced to $\e$, the estimation error of the
deferral loss, $\sE_{\ldefsc}(h) - \sE_{\ldefsc}^*(\sH)$, is upper
bounded by \[\paren*{\num + 1 - \sum_{j = 1}^{\num}\uv c_j}
\Gamma\paren*{\e/\paren*{\num + 1 - \sum_{j = 1}^{\num}\ov c_j}}.\]
Moreover, $\sH$-consistency holds since $\sE_{\lsc}(h) -
\sE_{\lsc}^*(\sH) \to 0$ implies $\sE_{\ldefsc}(h) -
\sE_{\ldefsc}^*(\sH)\to 0$.

Table~\ref{tab:sur-score-comp} shows several examples of surrogate
deferral losses and their corresponding $\sH$-consistency bounds,
using the multi-class $\sH$-consistency bounds known for comp-sum
losses $\ell$ with respect to the zero-one loss
\citep[Theorem~1]{mao2023cross}.  The bounds have been simplified here
using the inequalities $1 \leq \num + 1 - \sum_{j = 1}^{\num}\ov
c_j\leq \num + 1 - \sum_{j = 1}^{\num}\uv c_j\leq \num + 1$. See
Appendix~\ref{app:sur-score-example-comp} for a more detailed
derivation.

Similarly, Table~\ref{tab:sur-score-sum} and
Table~\ref{tab:sur-score-cstnd} show several examples of surrogate
deferral losses with sum losses or constrained losses adopted for
$\ell$ and their corresponding $\sH$-consistency bounds, using the
multi-class $\sH$-consistency bounds in
\citep[Table~2]{AwasthiMaoMohriZhong2022multi} and
\citep[Table~3]{AwasthiMaoMohriZhong2022multi} respectively. Here too,
we present the simplified bounds by using the inequalities $1 \leq
\num + 1 - \sum_{j = 1}^{\num}\ov c_j\leq \num + 1 - \sum_{j =
  1}^{\num}\uv c_j\leq \num + 1$.  See
Appendix~\ref{app:sur-score-example-sum} and
Appendix~\ref{app:sur-score-example-cstnd} for a more detailed
derivation.

\begin{table*}[t]
%   \vskip .1in
  \centering
  \resizebox{\textwidth}{!}{
  \begin{tabular}{@{\hspace{0cm}}lll@{\hspace{0cm}}}
    \toprule
    $\ell$  & $\sfL$   & $\sH$-consistency bounds\\
    \midrule
    $\ell_{\rm{exp}}$ & $\sum_{y'\neq y} e^{h(x, y') - h(x, y)}+\sum_{j=1}^{\num}(1-c_j(x,y))\sum_{y'\neq n+j} e^{h(x, y') - h(x, n+j)}$ & $\sqrt{2}(\num+1)\paren*{\sE_{\lsc}(h) - \sE_{\lsc}^*( \sH)}^{\frac12}$\\
    $\ell_{\rm{log}}$   & $-\log\paren*{\frac{e^{h(x,y)}}{\sum_{y'\in \ov \sY}e^{h(x,y')}}}-\sum_{j=1}^{\num}(1-c_j(x,y))\log\paren*{\frac{e^{h(x,n+j)}}{\sum_{y'\in \ov \sY}e^{h(x,y')}}}$ & $\sqrt{2}(\num+1)\paren*{\sE_{\lsc}(h) - \sE_{\lsc}^*( \sH)}^{\frac12}$   \\
    $\ell_{\rm{gce}}$    & $\frac{1}{\alpha}\bracket*{1 - \bracket*{\frac{e^{h(x,y)}}
    {\sum_{y'\in \ov \sY} e^{h(x,y')}}}^{\alpha}}+\frac{1}{\alpha}\sum_{j=1}^{\num}(1-c_j(x,y))\bracket*{1 - \bracket*{\frac{e^{h(x,n+j)}}
    {\sum_{y'\in \ov \sY} e^{h(x,y')}}}^{\alpha}}$  & $\sqrt{2n^{\alpha}}(\num+1)\paren*{\sE_{\lsc}(h) - \sE_{\lsc}^*( \sH)}^{\frac12}$  \\
    $\ell_{\rm{mae}}$ & $ 1 - \frac{e^{h(x,y)}}{\sum_{y'\in \ov \sY} e^{h(x, y')}}+\sum_{j=1}^{\num}(1-c_j(x,y))\paren*{1 - \frac{e^{h(x,n+j)}}{\sum_{y'\in \ov \sY} e^{h(x, y')}}}$ &    $n \paren*{\sE_{\lsc}(h) - \sE_{\lsc}^*( \sH)}$    \\
    \bottomrule
  \end{tabular}
  }
  \caption{Examples of the deferral surrogate loss \eqref{eq:sur-score} with comp-sum losses adopted for $\ell$
  and their associated $\sH$-consistency bounds provided by
  Corollary~\ref{cor:bound-score} (with only the surrogate portion
  displayed).}
\label{tab:sur-score-comp}
\end{table*}

\begin{table*}[t]
% \vskip .1in
  \centering
  \resizebox{\textwidth}{!}{
  \begin{tabular}{@{\hspace{0cm}}lll@{\hspace{0cm}}}
    \toprule
    $\ell$  & $\sfL$   & $\sH$-consistency bounds\\
    \midrule
    $\Phi_{\mathrm{sq}}^{\mathrm{sum}}$ & 
    $\sum_{y'\neq y} \Phi_{\rm{sq}}\paren*{\Delta_h(x,y,y')}+\sum_{j=1}^{\num}(1-c_j(x,y))\sum_{y'\neq n+j} \Phi_{\rm{sq}}\paren*{\Delta_h(x,n+j,y')}$  & $(\num+1)\paren*{\sE_{\lsc}(h) - \sE_{\lsc}^*( \sH)}^{\frac12}$\\
    $\Phi_{\mathrm{exp}}^{\mathrm{sum}}$   & $\sum_{y'\neq y} \Phi_{\rm{exp}}\paren*{\Delta_h(x,y,y')}+\sum_{j=1}^{\num}(1-c_j(x,y))\sum_{y'\neq n+j} \Phi_{\rm{exp}}\paren*{\Delta_h(x,n+j,y')}$ & $\sqrt{2}(\num+1)\paren*{\sE_{\lsc}(h) - \sE_{\lsc}^*( \sH)}^{\frac12}$   \\
    $\Phi_{\rho}^{\mathrm{sum}}$    &  $\sum_{y'\neq y} \Phi_{\rho}\paren*{\Delta_h(x,y,y')}+\sum_{j=1}^{\num}(1-c_j(x,y))\sum_{y'\neq n+j} \Phi_{\rho}\paren*{\Delta_h(x,n+j,y')}$  & $\sE_{\lsc}(h) - \sE_{\lsc}^*( \sH)$  \\
    \bottomrule
  \end{tabular}
  }
 \caption{Examples of the deferral surrogate loss \eqref{eq:sur-score} with sum losses adopted for $\ell$
  and their associated $\sH$-consistency bounds provided by
  Corollary~\ref{cor:bound-score} (with only the surrogate portion
  displayed), where $\Delta_h(x,y,y')=h(x, y) - h(x, y')$, $\Phi_{\mathrm{sq}}(t)=\max\curl*{0, 1 - t}^2$, $\Phi_{\mathrm{exp}}(t)=e^{-t}$,
and $\Phi_{\rho}(t)=\min\curl*{\max\curl*{0,1 - t/\rho},1}$.}
\label{tab:sur-score-sum}
\end{table*}

\begin{table*}[t]
% \vskip .1in
  \centering
  \resizebox{\textwidth}{!}{
  \begin{tabular}{@{\hspace{0cm}}lll@{\hspace{0cm}}}
    \toprule
    $\ell$  & $\sfL$   & $\sH$-consistency bounds\\
    \midrule
    $\Phi_{\mathrm{hinge}}^{\mathrm{cstnd}}$ & $\sum_{y'\neq y}\Phi_{\mathrm{hinge}}\paren*{-h(x, y')}+\sum_{j=1}^{\num}(1-c_j(x,y))\sum_{y'\neq n+j}\Phi_{\mathrm{hinge}}\paren*{-h(x, y')}$ & $\sE_{\lsc}(h) - \sE_{\lsc}^*( \sH)$\\
    $\Phi_{\mathrm{sq}}^{\mathrm{cstnd}}$   & $\sum_{y'\neq y}\Phi_{\mathrm{sq}}\paren*{-h(x, y')}+\sum_{j=1}^{\num}(1-c_j(x,y))\sum_{y'\neq n+j}\Phi_{\mathrm{sq}}\paren*{-h(x, y')}$ & $(\num+1)\paren*{\sE_{\lsc}(h) - \sE_{\lsc}^*( \sH)}^{\frac12}$   \\
    $\Phi_{\mathrm{exp}}^{\mathrm{cstnd}}$    & $\sum_{y'\neq y}\Phi_{\mathrm{exp}}\paren*{-h(x, y')}+\sum_{j=1}^{\num}(1-c_j(x,y))\sum_{y'\neq n+j}\Phi_{\mathrm{exp}}\paren*{-h(x, y')}$  & $\sqrt{2}(\num+1)\paren*{\sE_{\lsc}(h) - \sE_{\lsc}^*( \sH)}^{\frac12}$  \\
    $\Phi_{\rho}^{\mathrm{cstnd}}$ & $\sum_{y'\neq y}\Phi_{\rho}\paren*{-h(x, y')}+\sum_{j=1}^{\num}(1-c_j(x,y))\sum_{y'\neq n+j}\Phi_{\rho}\paren*{-h(x, y')}$ &    $\sE_{\lsc}(h) - \sE_{\lsc}^*( \sH)$    \\
    \bottomrule
  \end{tabular}
  }
 \caption{Examples of the deferral surrogate loss \eqref{eq:sur-score} with constrained losses adopted for $\ell$
  and their associated $\sH$-consistency bounds provided by
  Corollary~\ref{cor:bound-score} (with only the surrogate portion
  displayed), where $\Phi_{\mathrm{hinge}}(t) = \max\curl*{0,1 - t}$, $\Phi_{\mathrm{sq}}(t)=\max\curl*{0, 1 - t}^2$,
  $\Phi_{\mathrm{exp}}(t)=e^{-t}$,
and
$\Phi_{\rho}(t)=\min\curl*{\max\curl*{0,1 - t/\rho},1}$ with the constraint that $\sum_{y\in \sY}h(x,y)=0$.}
\label{tab:sur-score-cstnd}
\end{table*}

\section{Benefits of minimizability gaps}
\label{sec:minimizability_gaps}

As already pointed out, the minimizabiliy gap can be upper bounded by
the approximation error $\sA_{\ell}(\sH)= \sE^*_{\ell}(\sH) -
\E_x\bracket[\big]{\inf_{h \in \sH_{\rm{all}}} \E_{y | x}
  \bracket*{\ell(h, x,
    y)}}=\sE^*_{\ell}(\sH)-\sE^*_{\ell}(\sH_{\rm{all}})$.  It is
however a finer quantity than the approximation error and can thus
lead to more favorable guarantees.  More precisely, as shown by
\citep{awasthi2022Hconsistency,AwasthiMaoMohriZhong2022multi}, for a
target loss function $\ell_2$ and a surrogate loss function $\ell_1$,
the excess error bound can be rewritten as
\begin{equation*}
\sE_{\ell_2} (h) - \sE^*_{\ell_2}(\sH) +\sA_{\ell_2}(\sH)
\leq \Gamma\paren*{ \sE_{\ell_1} (h) - \sE^*_{\ell_1}(\sH)+\sA_{\ell_1}(\sH)},
\end{equation*}
where $\Gamma$ is typically linear or the square-root function modulo
constants.  On the other hand, an $\sH$-consistency bound can be
expressed as follows:
\begin{equation*}
\sE_{\ell_2} (h) - \sE^*_{\ell_2}(\sH) +  \sM_{\ell_2}(\sH)  \leq \Gamma\paren*{ \sE_{\ell_1} (h) - \sE^*_{\ell_1}(\sH) + \sM_{\ell_1}(\sH)}.
\end{equation*}
For a target loss function $\ell_2$ with discrete outputs, such as the
zero-one loss or the deferral loss, we have
$\E_{x}\bracket[\big]{\inf_{h \in\sH}\E_{y | x}\bracket*{\ell_2(h, x,
    y)}}=\E_x\bracket[\big]{\inf_{h \in \sH_{\rm{all}}} \E_{y | x}
  \bracket*{\ell_2(h,x, y)}}$ when the hypothesis set generates labels
that cover all possible outcomes for each input (See
\citep[Lemma~3]{AwasthiMaoMohriZhong2022multi},
Lemma~\ref{lemma:calibration_gap_score} in
Appendix~\ref{sec:app_def_cond}). Consequently, we have
$\sM_{\ell_2}(\sH) = \sA_{\ell_2}(\sH)$. For a surrogate loss function
$\ell_1$, the minimizability gap is upper bounded by the approximation
error, $\sM_{\ell_1}(\sH)\leq \sA_{\ell_1}(\sH)$, and is generally
finer.

Consider a simple binary classification example with the conditional
distribution denoted as $\eta(x)=D(Y=1 | X=x)$. Let $\sH$ be a family
of functions $h$ such that $|h(x)| \leq \Lambda$ for all $x \in \sX$,
for some $\Lambda > 0$, and such that all values in the range
$[-\Lambda, +\Lambda]$ can be achieved. For the exponential-based
margin loss, defined as $\ell(h, x, y) = e^{-yh(x)}$, we have
\begin{equation*}
\E_{y | x}[\ell(h, x, y)] = \eta(x)
e^{-h(x)} + (1 - \eta(x)) e^{h(x)}.  
\end{equation*}
It can be observed that the infimum over all measurable functions can
be written as follows, for all $x$:
\begin{equation*}
\inf_{h
  \in \sH_{\mathrm{all}}}\E_{y | x}[\ell(h, x, y)] =
2\sqrt{\eta(x)(1-\eta(x))},
\end{equation*}
while the infimum over $\sH$, $\inf_{h \in \sH}\E_{y
  | x}[\ell(h, x, y)]$, depends on $\Lambda$. That infimum over $\sH$ is achieved by
  \begin{equation*}
   h(x)=\begin{cases}
   \min\curl*{\frac{1}{2} \log \frac{\eta(x)}{1
    -\eta(x)}, \Lambda} & \eta(x) \geq 1/2\\
     \max\curl*{\frac{1}{2}\log \frac{\eta(x)}{1 - \eta(x)}, -\Lambda} & \text{otherwise}.
   \end{cases} 
  \end{equation*}
Thus, in the deterministic case, we can explicitly compute the
difference between the approximation error and the minimizability gap:
\begin{align*}
\sA_{\ell}(\sH)-\sM_{\ell}(\sH)
= \E_{x}\bracket[\big]{\inf_{h \in\sH}\E_{y |
    x}\bracket*{\ell(h, x, y)}-\inf_{h \in
    \sH_{\rm{all}}} \E_{y | x} \bracket*{\ell(h,x, y)}}
    = e^{-\Lambda}.   
\end{align*}
As the parameter $\Lambda$ decreases, the hypothesis set $\sH$ becomes
more restricted and the difference between the approximation error and
the minimizability gap increases. In summary, an $\sH$-consistency
bound can be more favorable than the excess error bound as
$\sM_{\ell_2}(\sH) = \sA_{\ell_2}(\sH)$ when $\ell_2$ represents the
zero-one loss or deferral loss, and $\sM_{\ell_1}(\sH) \leq
\sA_{\ell_1}(\sH)$. Moreover, we will show in the next section that
our $\sH$-consistency bounds can lead to learning bounds for the
deferral loss and a hypothesis set $\sH$ with finite samples.

\section{Learning bounds}
\label{sec:learning-bound}
For a sample $S=\paren*{(x_1,y_1),\ldots,(x_m,y_m)}$ drawn from
$\sD^m$, we will denote by $\h h_S$ the empirical minimizer of the
empirical loss within $\sH$ with respect to the surrogate loss
function $\sfL$:
$
\h h_S=\argmin_{h\in \sH}\frac{1}{m}\sum_{i=1}^m \sfL(h,x_i,y_i).
$
Given an $\sH$-consistency bound in the form of
\eqref{eq:H-consistency-bounds}, we can further use it to derive a
learning bound for the deferral loss by upper bounding the surrogate
estimation error $\sE_{\lsc}(\h h_S) - \sE_{\lsc}^*(\sH)$ with the
complexity (e.g. the Rademacher complexity) of the family of functions
associated with $\sfL$ and $\sH$: $\sH_{\sfL}=\curl*{(x, y) \mapsto
  \sfL(h, x, y) \colon h \in \sH}$.

We denote by $\Rad_m^{\sfL}(\sH)$ the Rademacher complexity of
$\sH_{\sfL}$ and by $B_{\sfL}$ an upper bound of the surrogate loss
$\sfL$. Then, we obtain the following learning bound for the deferral
loss based on \eqref{eq:H-consistency-bounds}.

\begin{restatable}[\textbf{Learning bound}]{theorem}{GBoundScore}
\label{Thm:Gbound-score}
Under the same assumptions as Theorem~\ref{Thm:bound-score}, for any
$\delta > 0$, with probability at least $1-\delta$ over the draw of an
i.i.d sample $S$ of size $m$, the following deferral loss estimation
bound holds for $\h h_S$:
\begin{equation*}
\sE_{\ldefsc}(\h h_S) - \sE_{\ldefsc}^*( \sH) + \sM_{\lsc}( \sH) \\ \leq \paren[\bigg]{\num
  + 1 - \sum_{j = 1}^{\num}\uv c_j} \Gamma\paren*{\frac{4
    \Rad_m^{\sfL}(\sH) + 2 B_{\sfL} \sqrt{\tfrac{\log
        \frac{2}{\delta}}{2m}} + \sM_{\lsc}( \sH)}{\num + 1 - \sum_{j
      = 1}^{\num}\ov c_j}}.
\end{equation*}
\end{restatable}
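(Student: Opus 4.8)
The plan is to combine the $\sH$-consistency bound of Theorem~\ref{Thm:bound-score} with a standard uniform-convergence bound for the surrogate estimation error. Inequality~\eqref{eq:H-consistency-bounds} already controls the deferral estimation error (together with the minimizability-gap term on its left-hand side) by $\paren*{\num+1-\sum_{j=1}^{\num}\uv c_j}$ times $\Gamma$ evaluated at $\bracket*{\sE_{\lsc}(\h h_S)-\sE_{\lsc}^*(\sH)+\sM_{\lsc}(\sH)}$ divided by $\paren*{\num+1-\sum_{j=1}^{\num}\ov c_j}$. Since $\Gamma$ is non-decreasing, it suffices to produce a high-probability upper bound on the surrogate estimation error $\sE_{\lsc}(\h h_S)-\sE_{\lsc}^*(\sH)$ and substitute it into the argument of $\Gamma$; the deterministic quantity $\sM_{\lsc}(\sH)$ and the two constant factors then pass through unchanged.

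First I would bound the surrogate estimation error by a symmetrization argument over the loss class $\sH_{\sfL}=\curl*{(x,y)\mapsto\sfL(h,x,y)\colon h\in\sH}$. Writing $\h\sE_{\lsc}(h)=\frac1m\sum_{i=1}^m\sfL(h,x_i,y_i)$ for the empirical surrogate loss and letting $h^*\in\sH$ nearly attain $\sE_{\lsc}^*(\sH)$ (passing to a limit if the infimum is not attained), I decompose
\[
\sE_{\lsc}(\h h_S)-\sE_{\lsc}^*(\sH)
=\bracket[\big]{\sE_{\lsc}(\h h_S)-\h\sE_{\lsc}(\h h_S)}
+\bracket[\big]{\h\sE_{\lsc}(\h h_S)-\h\sE_{\lsc}(h^*)}
+\bracket[\big]{\h\sE_{\lsc}(h^*)-\sE_{\lsc}(h^*)}.
\]
The middle bracket is non-positive because $\h h_S$ minimizes the empirical surrogate loss; the first bracket is at most $\sup_{h\in\sH}\bracket[\big]{\sE_{\lsc}(h)-\h\sE_{\lsc}(h)}$ and the third at most $\sup_{h\in\sH}\bracket[\big]{\h\sE_{\lsc}(h)-\sE_{\lsc}(h)}$.

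Next I would apply the standard Rademacher bound obtained from McDiarmid's inequality and symmetrization: since $\sfL$ takes values in $[0,B_{\sfL}]$, each one-sided supremum is at most $2\Rad_m^{\sfL}(\sH)+B_{\sfL}\sqrt{\tfrac{\log(2/\delta)}{2m}}$ with probability at least $1-\delta/2$. A union bound over the two directions, followed by summing, yields that with probability at least $1-\delta$,
\[
\sE_{\lsc}(\h h_S)-\sE_{\lsc}^*(\sH)\leq 4\Rad_m^{\sfL}(\sH)+2B_{\sfL}\sqrt{\tfrac{\log(2/\delta)}{2m}}.
\]
Substituting this into the monotone $\Gamma$ appearing in \eqref{eq:H-consistency-bounds} then gives the claimed estimation bound.

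The computation is essentially routine; the only delicate point is the constant bookkeeping. In particular, the factor $4$ in front of $\Rad_m^{\sfL}(\sH)$ and the factor $2$ in front of $B_{\sfL}\sqrt{\log(2/\delta)/(2m)}$ arise precisely from bounding both one-sided deviation terms by separate Rademacher suprema (rather than controlling the fixed-$h^*$ term by Hoeffding alone) and from splitting the confidence as $\delta/2+\delta/2$. I expect the main---though mild---obstacle to be confirming that chaining the high-probability surrogate bound through $\Gamma$ introduces no additional slack, which holds because $\Gamma$ is non-decreasing, so that the event on which the surrogate bound holds is exactly the event on which the final deferral bound holds.
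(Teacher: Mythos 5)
Your proposal is correct and follows essentially the same route as the paper's proof: the same empirical-minimizer decomposition with a near-optimal $h^*$ (your ``passing to a limit'' plays the role of the paper's explicit $\e$-argument), the same Rademacher-complexity uniform bound yielding $4\Rad_m^{\sfL}(\sH) + 2B_{\sfL}\sqrt{\log(2/\delta)/(2m)}$, and the same final substitution into \eqref{eq:H-consistency-bounds} via monotonicity of $\Gamma$. The only cosmetic difference is that you split the confidence into two one-sided events of mass $\delta/2$ whereas the paper invokes the two-sided uniform bound directly, which amounts to the same union bound and the same constants.
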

The proof is presented in Appendix~\ref{app:Gbound-score}. To the best
of our knowledge, Theorem~\ref{Thm:Gbound-score} provides the first
finite-sample guarantee for the estimation error of the minimizer of a
surrogate deferral loss $\lsc$ defined for multiple experts.  The
proof exploits our $\sH$-consistency bounds with respect to the
deferral loss, as well as standard Rademacher complexity guarantees.

When $\uv c_j=0$ and $\ov c_j=1$ for any $j\in [\num]$, the right-hand
side of the bound admits the following simpler form:
\begin{align*}
 \paren*{\num + 1} \, \Gamma\paren*{4 \Rad_m^{\sfL}(\sH) +
   2 B_{\sfL} \sqrt{\tfrac{\log \frac{2}{\delta}}{2m}}
   + \sM_{\lsc}( \sH)}.   
\end{align*}
The dependency on the number of experts $\num$ makes this bound less
favorable. There is a trade-off however since, on the other hand, more
experts can help us achieve a better accuracy overall and reduce the
best-in-class deferral loss.  These learning bounds take into account
the minimizability gap, which varies as a function of the upper bound
$\Lambda$ on the magnitude of the scoring functions. Thus, both the
minimizability gaps and the Rademacher complexity term suggest a
regularization controlling the complexity of the hypothesis set
and the magnitude of the scores.
 
Adopting different loss functions $\ell$ in the definition of our
deferral surrogate loss \eqref{eq:sur-score} will lead to a different
functional form $\Gamma$, which can make the bound more or less
favorable. For example, a linear form of $\Gamma$ is in general more
favorable than a square-root form modulo a constant. But, the
dependency on the number of classes $n$ appearing in $\Gamma$ (e.g.,
$\ell = \ell_{\rm{gce}}$ or $\ell = \ell_{\rm{mae}}$) is also
important to take into account since a larger value of $n$ tends to
negatively impact the guarantees. We already discussed the dependency
on the number of experts $\num$ in $\Gamma$ (e.g., $\ell =
\ell_{\rm{gce}}$ or $\ell = \ell_{\rm{exp}}$) and the associated
trade-off, which is also important to consider.

Note that the bound of Theorem~\ref{Thm:Gbound-score} is expressed in
terms of the global complexity of the prediction and deferral scoring
functions $\sH$. One can however derive a finer bound distinguishing
the complexity of the deferral scoring functions and that of the
prediction scoring functions following a similar proof and analysis.

Recall that for a surrogate loss $\sfL$, the minimizability gap
$\sM_{\sfL}(\sH)$ is in general finer than the approximation error
$\sA_{\sfL}(\sH)$, while for the deferral loss, for common hypothesis
sets, these two quantities coincide. Thus, our bound can be rewritten
as follows for common hypothesis sets:
\begin{equation*}
\sE_{\ldefsc}(\h h_S) - \sE_{\ldefsc}^*(\sH_{\rm{all}})
\leq \paren[\bigg]{\num + 1 - \sum_{j = 1}^{\num}\uv c_j} \Gamma\paren*{\frac{4 \Rad_m^{\sfL}(\sH) +
2 B_{\sfL} \sqrt{\tfrac{\log \frac{2}{\delta}}{2m}} + \sM_{\lsc}( \sH)}{\num + 1-\sum_{j = 1}^{\num}\ov c_j}}.
\end{equation*}
This is more favorable and more relevant than a similar excess loss
bound where $\sM_{\lsc}( \sH)$ is replaced with $\sA_{\lsc}(\sH)$,
which could be derived from a generalization bound for the surrogate
loss.

\ignore{
as follows
and is a more meaningful learning guarantee:
\begin{align*}
\sE_{\ldefsc}(\h h_S) - \sE_{\ldefsc}^*(\sH_{\rm{all}})
\leq \paren[\bigg]{\num + 1 - \sum_{j = 1}^{\num}\uv c_j} \Gamma\paren*{\frac{4 \Rad_m^{\sfL}(\sH) +
2 B_{\sfL} \sqrt{\tfrac{\log \frac{2}{\delta}}{2m}} + \sA_{\lsc}(\sH)}{\num + 1-\sum_{j = 1}^{\num}\ov c_j}}.    
\end{align*}
}

\section{Experiments}
\label{sec:experiments}

In this section, we examine the empirical performance of our proposed
surrogate loss in the scenario of learning to defer with multiple
experts. More specifically, we aim to compare the overall system
accuracy for the learned predictor and deferral pairs, considering
varying numbers of experts. This comparison provides valuable
insights into the performance of our algorithm under different expert
configurations. We explore three different scenarios:
\begin{itemize}
  
    \item Only a single expert is available, specifically where a
      larger model than the base model is chosen as the deferral
      option.

    \item Two experts are available, consisting of one small model and
      one large model as the deferral options.

    \item Three experts are available, including one small model, one
      medium model, and one large model as the deferral options.
\end{itemize}
By comparing these scenarios, we evaluate the impact of varying the
number and type of experts on the overall system accuracy. 

\textbf{Type of cost.}  We carried out experiments with two types of
cost functions. For the first type, we selected the cost function to
be exactly the misclassification error of the expert: $c_j(x, y) =
\1_{\expertexpert_{j}(x) \neq y}$, where $\expertexpert_{j}(x) =
\argmax_{y \in [n]}\expert_j(x, y)$ is the prediction made by expert
$\expert_j$ for input $x$. In this scenario, the cost incurred for
deferring is determined solely based on the expert's accuracy. For the
second type, we chose a cost function admitting the form $c_j(x, y) =
\1_{\expertexpert_{j}(x) \neq y} + \beta_j$, where an additional
non-zero base cost $\beta_j$ is assigned to each expert.  Deferring to
a larger model then tends to incur a higher inference cost and hence,
the corresponding $\beta_j$ value for a larger model is higher as
well. In addition to the base cost, each expert also incurs a
misclassification error, as with the first type. Experimental setup and additional experiments (see Table~\ref{tab:additional}) are included in Appendix~\ref{app:experiments}.

\textbf{Experimental Results.} In Table~\ref{tab:first-type} and
Table~\ref{tab:second-type}, we report the mean and standard deviation
of the system accuracy over three runs with different random seeds. We
noticed a positive correlation between the number of experts and the
overall system accuracy. Specifically, as the number of experts
increases, the performance of the system in terms of accuracy
improves. This observation suggests that incorporating multiple
experts in the learning to defer framework can lead to better
predictions and decision-making. The results also demonstrate the
effectiveness of our proposed surrogate loss for deferral with
multiple experts.

\begin{table}[t]
  \centering
%   \resizebox{0.9\columnwidth}{!}{
  \begin{tabular}{@{\hspace{0cm}}llll@{\hspace{0cm}}}
    \toprule
    & Single expert   & Two experts & Three experts\\
    \midrule
    SVHN  & 92.08 $\pm$ 0.15\% & 93.18 $\pm$ 0.18\%  &  93.46 $\pm$ 0.12\% \\
    CIFAR-10 & 73.31 $\pm$ 0.21\% & 77.12 $\pm$ 0.34\% & 78.71 $\pm$ 0.43\%\\
    \bottomrule
  \end{tabular}
%   }
 \caption{Overall system accuracy with the first type of cost functions.}
 \label{tab:first-type}
 \vskip -0.1in
\end{table}

\begin{table}[t]
  \centering
%   \resizebox{0.9\columnwidth}{!}{
  \begin{tabular}{@{\hspace{0cm}}llll@{\hspace{0cm}}}
    \toprule
    & Single expert   & Two experts & Three experts\\
    \midrule
    SVHN  & 92.36 $\pm$ 0.22\% & 93.23 $\pm$ 0.21\%  &  93.36 $\pm$ 0.11\% \\
    CIFAR-10 & 73.70 $\pm$ 0.40\% & 76.29 $\pm$ 0.41\% & 76.43 $\pm$ 0.55\%\\
    \bottomrule
  \end{tabular}
%   }
  \caption{Overall system accuracy with the second type of cost functions.}
  \label{tab:second-type}
  \vskip -0.2in
\end{table}

\section{Conclusion}

We presented a comprehensive study of surrogate losses for the core
challenge of learning to defer with multiple experts. Through our
study, we established theoretical guarantees, strongly endorsing the adoption of the loss function family
we introduced. This versatile family of loss functions can effectively
facilitate the learning to defer algorithms across a wide range of
applications. Our analysis offers great flexibility by accommodating
diverse cost functions, encouraging exploration and evaluation of
various options in real-world scenarios. We encourage further research
into the theoretical properties of different choices and their impact
on the overall performance to gain deeper insights into their
effectiveness.

% \acks{}

% \newpage
\bibliography{def}

\newpage
\appendix

\renewcommand{\contentsname}{Contents of Appendix}
\tableofcontents
\addtocontents{toc}{\protect\setcounter{tocdepth}{3}} 
\clearpage

\section{Related work}
\label{app:related-work}

The concept of \emph{learning to defer} has its roots in research on
abstention, particularly in binary classification scenarios with a
constant cost function. Early work by \citep{Chow1957} and
\citet{chow1970optimum} focused on rejection and set the foundation
for subsequent studies on learning with abstention. These studies
explored different approaches such as \emph{confidence-based methods}
\citep{HerbeiWegkamp2005,bartlett2008classification,
grandvalet2008support, yuan2010classification}, the
\emph{predictor-rejector framework} \citep{CortesDeSalvoMohri2016,CortesDeSalvoMohri2016bis,CortesDeSalvoMohri2023}, or \emph{selective classification}
\citep{el2010foundations, WegkampYuan2011,wiener2011agnostic}

\citet{CortesDeSalvoMohri2016,CortesDeSalvoMohri2016bis,CortesDeSalvoMohri2023} showed that
the confidence-based approach could fail to determine the optimal
rejection region when the predictor did not match the Bayes
solution. Instead, they proposed a novel \emph{predictor-rejector}
framework, for which they gave both Bayes-consistent and
\emph{realizable $\sH$-consistent}
surrogate losses \citep{long2013consistency,kuznetsov2014multi,zhang2020bayes}, which achieve state-of-the-art performance in the
binary setting.

\citet{el2010foundations,wiener2011agnostic} introduced and studied a
selective classification based on a predictor and a selector and
explored the trade-off between classifier coverage and accuracy,
drawing connections to active learning in their analysis.

The confidence-based and predictor-rejector frameworks have been both
further analyzed in the context of \emph{multi-class
classification}. \citet{ramaswamy2018consistent,NiCHS19,
geifman2017selective,acar2020budget,gangrade2021selective} extended
the confidence-based method to multi-class settings, while
\citet{NiCHS19} noted that deriving a Bayes-consistent surrogate loss
under the \emph{predictor-rejector} framework is quite challenging and
left it as an open problem. More recently,  \citet{MaoMohriZhong2024predictor} presented a series of new theoretical and algorithmic results in this framework, positively resolving this open problem. \citet{MohriAndorChoiCollinsMaoZhong2024learning} also examined this framework in the context of learning with a fixed predictor and applied their new algorithms to the task of decontextualization.
\citet{cheng2023regression,li2024no} further studied this framework in the context of regression with abstention.
In response to the challenge in the multi-class setting, \citet{mozannar2020consistent}
formulated a different \emph{score-based} approach to learn the
predictor and rejector simultaneously, by introducing an additional
scoring function corresponding to rejection. This method has been
further explored in a subsequent work \citep{caogeneralizing,MaoMohriZhong2024score}. The
surrogate losses derived under this framework are currently the state-of-the-art \citep{mozannar2020consistent,caogeneralizing,MaoMohriZhong2024score}.

\citet{geifman2019selectivenet} proposed a new neural network
architecture for abstention in the selective classification framework
for multi-class classification. They did not derive consistent
surrogate losses for this formulation.  \citet{ziyin2019deep} defined
a loss function for the predictor-selector framework based on the
doubling rate of gambling that requires almost no modification to the
model architecture.

Another line of research studied multi-class abstention using an
\emph{implicit criterion}
\citep{kalai2012reliable,acar2020budget,gangrade2021selective,
charoenphakdee2021classification}, by directly modeling regions with
high confidence.

However, a constant cost does not fully capture all the relevant
information in the deferral scenario. It is important to take into
account the quality of the expert, whose prediction we rely on.  These
may be human experts as in several critical applications
\citep{kamar2012combining,tan2018investigating,kleinberg2018human,
  bansal2021most}. To address this gap, \citet{madras2018learning}
incorporated the human expert's decision into the cost and proposed
the first \emph{learning to defer (L2D)} framework, which has also
been examined in \citep{raghu2019algorithmic,wilder2021learning,
  pradier2021preferential,keswani2021towards}.
\citet{mozannar2020consistent} proposed the first
\emph{Bayes-consistent}
\citep{Zhang2003,bartlett2006convexity,steinwart2007compare} surrogate
loss for L2D, and subsequent work
\citep{raman2021improving,liu2022incorporating} further improved upon
it. Another Bayes-consistent surrogate loss in L2D is the
one-versus-all loss proposed by \citet{verma2022calibrated} that is
also studied in \citep{charusaie2022sample} as a special case of a
general family of loss functions. An additional line of research
investigated post-hoc methods
\citep{okati2021differentiable,narasimhanpost}, where
\citet{okati2021differentiable} proposed an alternative optimization
method between the predictor and rejector, and \citet{narasimhanpost}
provided a correction to the surrogate losses in
\citep{mozannar2020consistent, verma2022calibrated} when they are
underfitting. Finally, L2D or its variants have been adopted or
studied in various other scenarios
\citep{de2020regression,straitouri2021reinforcement,
  zhao2021directing,joshi2021pre,gao2021human,
  mozannar2022teaching,liu2022incorporating,
  hemmer2023learning,narasimhan2023learning,cao2023defense,chen2024learning}.

All the studies mentioned so far mainly focused on learning to defer
with a single expert. Most recently, \citet{verma2023learning}
highlighted the significance of \emph{learning to defer with multiple
experts}
\citep{hemmer2022forming,keswani2021towards,kerrigan2021combining,
  straitouri2022provably,benz2022counterfactual} and extended the
surrogate loss in \citep{verma2022calibrated,mozannar2020consistent}
to accommodate the multiple-expert setting, which is currently the
only work to propose Bayes-consistent surrogate losses in this
scenario. They further showed that a mixture of experts (MoE) approach
to multi-expert L2D proposed in \citep{hemmer2022forming} is not
consistent.  More recently, \citet{mao2023two} examined a two-stage scenario for learning to defer with multiple experts, which is crucial for various applications. They developed new surrogate losses for this scenario and demonstrated that these are supported by stronger consistency guarantees—specifically, $\sH$-consistency bounds as introduced below—implying their Bayes consistency. \citet{mao2024regression} first examined the setting of regression with multiple experts, where they provided novel surrogate losses and their $\sH$-consistency bounds guarantees.

Meanwhile, recent work by \citet{awasthi2022Hconsistency,
  AwasthiMaoMohriZhong2022multi} introduced new consistency
guarantees, called $\sH$-consistency bounds, which they argued are
more relevant to learning than Bayes-consistency since they are
hypothesis set-specific and non-asymptotic. $\sH$-consistency bounds
are also stronger guarantees than Bayes-consistency. They established
$\sH$-consistent bounds for common surrogate losses in standard
classification (see also \citep{mao2023cross,zheng2023revisiting,MaoMohriZhong2023characterization}). These guarantees have also been used in the study of ranking \citep{MaoMohriZhong2023ranking,MaoMohriZhong2023rankingabs}, structured prediction \citep{MaoMohriZhong2023structured}, regression \citep{mao2024regression}, top-$k$ classification \citep{mao2024top} and adversarial robustness \citep{awasthi2021calibration,awasthi2021finer,awasthi2021existence,awasthi2024dc,AwasthiMaoMohriZhong2023theoretically,mao2023cross}.

In this work, we study the general framework of learning to defer
with multiple experts. Furthermore, we design deferral surrogate
losses that benefit from these more significant consistency
guarantees, namely, $\sH$-consistency bounds, in the general
multiple-expert setting.

\section{Experimental details} 
\label{app:experiments}
\paragraph{Experimental setup.} For our experiments, we used two popular
datasets: CIFAR-10 \citep{Krizhevsky09learningmultiple} and SVHN
(Street View House Numbers) \citep{Netzer2011}. CIFAR-10 consists of
$60\mathord,000$ color images in 10 different classes, with
$6\mathord,000$ images per class. The dataset is split into
$50\mathord,000$ training images and $10\mathord,000$ test
images. SVHN contains images of house numbers captured from Google
Street View. It consists of $73\mathord,257$ images for training and
$26\mathord,032$ images for testing. We trained for 50 epochs on
CIFAR-10 and 15 epochs on SVHN without any data augmentation.

In our experiments, we adopted the ResNet \citep{he2016deep}
architecture for the base model and selected various sizes of ResNet
models as experts in each scenario. Throughout all three scenarios, we
used ResNet-$4$ for both the predictor and the deferral models. In the
first scenario, we chose ResNet-$10$ as the expert model. In the
second scenario, we included ResNet-$10$ and ResNet-$16$ as expert
models. The third scenario involves ResNet-$10$, ResNet-$16$, and
ResNet-$28$ as expert models with increasing complexity. The expert
models are pre-trained on the training data of SVHN and CIFAR-10
respectively.

During the training process, we simultaneously trained the predictor
ResNet-$4$ and the deferral model ResNet-$4$. We adopted the Adam
optimizer \citep{kingma2014adam} with a batch size of $128$ and a
weight decay of $1\times 10^{-4}$. We used our proposed deferral
surrogate loss \eqref{eq:sur-score} with the generalized cross-entropy
loss being adopted for $\ell$. As suggested by
\citet{zhang2018generalized}, we set the parameter $\alpha$ to $0.7$.

For the second type of cost functions, we set the base costs as
follows: $\beta_1=0.1$, $\beta_2=0.12$ and $\beta_3=0.14$ for the SVHN
dataset and $\beta_1=0.3$, $\beta_2=0.32$, $\beta_3=0.34$ for the
CIFAR-10 dataset, where $\beta_1$ corresponds to the cost associated
with the smallest expert model, ResNet-$10$, $\beta_2$ to that of the
medium model, ResNet-$16$, and $\beta_3$ to that of the largest expert
model, ResNet-$28$. A base cost value that is not too far from the
misclassification error of expert models encourages in practice a
reasonable amount of input instances to be deferred.  We observed that
the performance remains close for other neighboring values of base
costs.

\paragraph{Additional experiments.} Here, we share additional experimental results in an intriguing setting where multiple experts are available and each of them has a clear domain of expertise. We report below the empirical results of our proposed deferral surrogate loss and the one-vs-all (OvA) surrogate loss proposed in recent work \citep{verma2023learning}, which is the state-of-the-art surrogate loss for learning to defer with multiple experts, on CIFAR-10. In this setting, the two experts have a clear domain of expertise. The expert 1 is always correct on the first three classes, 0 to 2, and predicts uniformly at random for other classes; the expert 2 is always correct on the next three classes, 3 to 5, and generates random predictions otherwise. We train a ResNet-16 for the predictor/deferral model. 

As shown in Table~\ref{tab:additional}, our method achieves comparable system accuracy with OvA. Among the images in classes 0 to 2, only $3.57\%$ is deferred to expert 2 which predicts uniformly at random. Similarly, among the images in classes 3 to 5, only $3.33\%$ is deferred to expert 1. For the rest of the images in classes 6 to 9, the predictor decides to learn to classify them by itself and actually makes $92.88\%$ of the final predictions. This illustrates that our proposed surrogate loss is effective and comparable to the baseline.

\begin{table*}[t]
  \centering
  \resizebox{\textwidth}{!}{
  \begin{tabular}{@{\hspace{0cm}}llllllllllllll@{\hspace{0cm}}}
    \toprule
    \multirow{3}{*}{Method} & \multirow{3}{*}{System accuracy (\%)} & \multicolumn{12}{c}{Ratio of deferral (\%)} \\
    \cmidrule{3-14}
    & & \multicolumn{3}{|c|}{all the classes} & \multicolumn{3}{|c|}{classes 0 to 2} & \multicolumn{3}{|c|}{classes 3 to 5} & \multicolumn{3}{|c|}{classes 6 to 9} \\
    \cmidrule{3-14}
    & & predictor & expert 1 & expert 2 & predictor & expert 1 & expert 2 & predictor & expert 1 & expert 2 & predictor & expert 1 & expert 2\\
    \midrule
    Ours & 92.19 & 61.43 & 17.38 & 21.19 & 46.77 & 49.67 & 3.57 & 33.60 & 3.33 & 63.07 & 92.88 & 3.43 & 3.70 \\
    OvA &  91.39 & 59.72 & 16.78 & 23.50 & 48.63 & 47.67 & 3.70 & 27.87 & 2.47 & 69.67 & 92.73 & 3.50 & 3.78 \\
    \bottomrule
  \end{tabular}
  }
 \caption{Comparison of our proposed deferral surrogate loss with the one-vs-all (OvA) surrogate loss in an intriguing setting where multiple experts are available and each of them has a clear domain of expertise.}
 \label{tab:additional}
\end{table*}

\section{Proof of \texorpdfstring{$\sH$}{H}-consistency bounds for deferral surrogate losses}

To prove $\sH$-consistency bounds for our deferral surrogate loss
functions, we will show how the \emph{conditional regret} of the
deferral loss can be upper bounded in terms of the \emph{conditional
regret} of the surrogate loss. The general theorems proven by
\citet[Theorem~4, Theorem~5]{AwasthiMaoMohriZhong2022multi} then
guarantee our $\sH$-consistency bounds.

For any $x \in \sX$ and $y \in \sY$, let $p(x, y)$ denote the
conditional probability of $Y = y$ given $X = x$ for any $y\in \sY$.
Then, for any $x \in \sX$, the \emph{conditional $\ldefsc$-loss}
$\sC_{\ldefsc}(h, x)$ and \emph{conditional regret} (or \emph{calibration
gap}) $\Delta \sC_{\ldefsc}(h, x)$ of a hypothesis $h \in \sH$ are
defined by
\begin{align*}
  \sC_{\ldefsc}(h, x)
& = \E_{y | x}[\ldefsc(h, x, y)]
= \sum_{y \in \sY} p(x, y) \ldefsc(h, x, y)\\
\Delta \sC_{\ldefsc}(h, x)
& = \sC_{\ldefsc}(h, x) - \sC^*_{\ldefsc}(\sH, x),
\end{align*}
where $\sC^*_{\ldefsc}(\sH, x) = \inf_{h \in \sH} \sC_{\ldefsc}(h, x)$.
Similar definitions hold for the surrogate loss $\lsc$. To bound
$\Delta \sC_{\ldefsc}(h, x)$ in terms of
$\Delta \sC_{\lsc}(h, x)$, we first give more explicit expressions
for these conditional regrets.

To do so, it will be convenient to use the following definition
for any $x \in \sX$ and $y \in [n + \num]$:
\begin{align*}
q(x, y) =
\begin{cases}
p(x, y) & y \in \sY \\
\ignore{\E_{y | x} \bracket*{1 - c_j(x, y)} = } 1 - \sum_{y \in \sY} p(x,y) c_j(x, y) & n + 1 \le y \leq n + \num.
\end{cases}
\end{align*}
Note that $q(x, y)$ is non-negative but, in general, these quantities
do not sum to one. We denote by $\ov q(x, y) = \frac{q(x, y)}{Q}$ their
normalized counterparts which represent probabilities, where
$Q = \sum_{y\in [n + \num]} q(x, y)$. 

For any $x \in \sX$, we will denote by $\mathsf H(x)$ the set of
labels generated by hypotheses in $\sH$: $\mathsf H(x) = \curl*{
  \hh(x) \colon h \in \sH}$. We denote by $y_{\max} \in [n + \num]$
the label associated by $q$ to an input $x\in \sX$, defined as $
y_{\max} = \argmax_{y \in[n + \num]} q(x, y)$, with the same
deterministic strategy for breaking ties as that of $ \hh(x)$.

\subsection{Conditional regret of the deferral loss}
\label{sec:app_def_cond}

With these definitions, we can now express the conditional loss
and regret of the deferral loss.

\begin{restatable}{lemma}{CalibrationGapScore}
\label{lemma:calibration_gap_score}
For any $x \in \sX$,
the minimal conditional $\ldefsc$-loss and
the calibration gap for $\ldefsc$ can be expressed as follows:
\begin{align*}
\sC^*_{\ldefsc}( \sH,x) & = 1 - \max_{y\in \mathsf H(x)} q(x, y)\\
\Delta\sC_{\ldefsc, \sH}(h,x) & = \max_{y\in \mathsf H(x)} q(x, y) - q(x, \hh(x)).
\end{align*}
\end{restatable}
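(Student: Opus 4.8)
The plan is to compute the conditional deferral loss $\sC_{\ldefsc}(h, x)$ in closed form and observe that it depends on $h$ only through the predicted label $\hh(x)$; both claimed expressions then follow at once. First I would split according to whether $\hh(x)$ lies in $[n]$ (a genuine class prediction) or equals $n + j$ for some $j \in [\num]$ (a deferral to expert $\expert_j$), since the indicator $\1_{\hh(x) \in [n]}$ appearing in \eqref{eq:def-score} toggles between these two regimes and selects exactly one term of the sum.

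In the prediction case $\hh(x) \in [n]$, only the first term of $\ldefsc$ survives, so $\sC_{\ldefsc}(h, x) = \sum_{y \in \sY} p(x, y) \1_{\hh(x) \neq y} = 1 - p(x, \hh(x))$, using $\sum_{y \in \sY} p(x,y) = 1$; since $p(x, \hh(x)) = q(x, \hh(x))$ when $\hh(x) \in \sY$, this equals $1 - q(x, \hh(x))$. In the deferral case $\hh(x) = n + j$, only the cost term survives, giving $\sC_{\ldefsc}(h, x) = \sum_{y \in \sY} p(x, y) c_j(x, y)$; by the defining identity $q(x, n + j) = 1 - \sum_{y \in \sY} p(x, y) c_j(x, y)$, this again equals $1 - q(x, \hh(x))$. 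The one step requiring care—and the conceptual heart of the argument—is precisely that both cases collapse to the single uniform identity $\sC_{\ldefsc}(h, x) = 1 - q(x, \hh(x))$, which is exactly what the chosen definition of $q$ is engineered to achieve.

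Given this identity, the minimal conditional loss follows by taking the infimum over $h \in \sH$: because $\sC_{\ldefsc}(h, x)$ depends on $h$ only through $\hh(x)$ and $\curl*{\hh(x) \colon h \in \sH} = \mathsf H(x)$ by definition, we obtain $\sC^*_{\ldefsc}(\sH, x) = 1 - \max_{y \in \mathsf H(x)} q(x, y)$. Subtracting this from $\sC_{\ldefsc}(h, x) = 1 - q(x, \hh(x))$ yields the calibration gap $\Delta\sC_{\ldefsc, \sH}(h,x) = \max_{y \in \mathsf H(x)} q(x, y) - q(x, \hh(x))$. I do not anticipate any genuine obstacle: the entire proof rests on the bookkeeping that unifies the prediction and deferral cases through $q$, after which both formulas are immediate.
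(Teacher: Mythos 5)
Your proposal is correct and follows essentially the same route as the paper's proof: both compute the conditional risk $\sC_{\ldefsc}(h,x)$ and show it collapses to the single expression $1 - q(x, \hh(x))$ via the definition of $q$ (the paper carries this out with indicator algebra rather than an explicit two-case split, but the content is identical), after which the infimum over $\mathsf H(x)$ and the subtraction give both formulas. No gaps.
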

\begin{proof}
The conditional
$\ldefsc$-risk of $h$ can be expressed as follows:
\begin{align*}
\sC_{\ldefsc}(h, x)
& = \E_{y | x} \bracket*{\ldefsc(h, x, y)}\\
& = \E_{y | x}
  \bracket*{\1_{\hh(x)\neq y}}\1_{\hh(x)\in [n]} + \sum_{j = 1}^{\num} \E_{y | x}
  \bracket*{c_j(x, y)}\1_{\hh(x) = n + j}\\
& = \sum_{y\in \sY} q(x, y) \1_{\hh(x)\neq y} \1_{\hh(x)\in [n]} + \sum_{j = 1}^{\num} (1 - q(x, n + j)) \1_{\hh(x) = n + j}\\
& = (1 - q(x, \hh(x))) \1_{\hh(x)\in [n]} + \sum_{j = 1}^{\num} (1 - q(x, \hh(x))) \1_{\hh(x) = n + j}\\
& = 1 - q(x, \hh(x)).
\end{align*}
Then, the minimal conditional $\ldefsc$-risk is given by
\[
\sC_{\ldefsc}^*(\sH,x) = 1 - \max_{y\in \mathsf H(x)} q(x, y),
\]
and the calibration gap can be expressed as follows:
\begin{align*}
  \Delta \sC_{\ldefsc,\sH}(h, x)
 = \sC_{\ldefsc}(h, x)-\sC_{\ldefsc}^*(\sH,x) = \max_{y\in \mathsf H(x)} q(x, y)-q(x,\hh(x)),
\end{align*}
which completes the proof.
\end{proof}

\subsection{Conditional regret of a surrogate deferral loss}

\begin{restatable}{lemma}{SurrogateCalibrationGapScore}
\label{lemma:surrogate_calibration_gap_score}
For any $x \in \sX$,
the conditional surrogate $\lsc$-loss and regret 
can be expressed as follows:
\begin{align*}
  \sC_{\lsc}(h, x)
  & = \sum_{y \in [n + \num]} q(x, y) \ell(h, x, y)\\
  \Delta \sC_{\lsc}(h, x)
  & = \sum_{y \in [n + \num]} q(x, y) \ell(h, x, y)  - \inf_{h \in \sH} \sum_{y \in [n + \num]} q(x, y) \ell(h, x, y).
\end{align*}
\end{restatable}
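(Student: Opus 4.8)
The plan is to compute $\sC_{\lsc}(h, x)$ directly from the definition of $\lsc$ in \eqref{eq:sur-score} by exploiting linearity of the conditional expectation, and then to read off the regret by subtracting the infimum over $\sH$. The only structural fact I need is that, among the terms appearing in \eqref{eq:sur-score}, only the first loss $\ell(h, x, y)$ and the cost coefficients $1 - c_j(x, y)$ depend on the random label $y$, whereas each deferral loss $\ell(h, x, n + j)$ depends solely on $h$, $x$, and the fixed index $n + j$.

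Concretely, I would first write $\sC_{\lsc}(h, x) = \E_{y \mid x}[\lsc(h, x, y)]$ and split it by linearity into the prediction term $\E_{y \mid x}[\ell(h, x, y)]$ and the $\num$ deferral terms. For the prediction term, since $\ell(h, x, y)$ is evaluated at the true label $y$, its conditional expectation equals $\sum_{y \in \sY} p(x, y)\, \ell(h, x, y)$, which is exactly $\sum_{y \in [n]} q(x, y)\, \ell(h, x, y)$ because $q(x, y) = p(x, y)$ for $y \in \sY$. For each deferral term, the factor $\ell(h, x, n + j)$ pulls out of the expectation, leaving the coefficient $\E_{y \mid x}[1 - c_j(x, y)] = 1 - \sum_{y \in \sY} p(x, y)\, c_j(x, y)$, which is precisely $q(x, n + j)$ by the definition of $q$. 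Summing over $j$ and recombining with the prediction term yields $\sC_{\lsc}(h, x) = \sum_{y \in [n + \num]} q(x, y)\, \ell(h, x, y)$, the claimed expression.

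Finally, I would obtain the regret simply by subtracting the minimal conditional loss, $\Delta \sC_{\lsc}(h, x) = \sC_{\lsc}(h, x) - \inf_{h \in \sH} \sC_{\lsc}(h, x)$, where substituting the expression just derived gives the stated formula. The argument is essentially a bookkeeping computation, so I do not anticipate any serious obstacle; the one point requiring care is correctly tracking the dependence on $y$, namely that the deferral losses are constant in $y$ while the cost coefficients are not, so that taking the expectation collapses the coefficients into the $q(x, n + j)$ terms rather than into the losses.
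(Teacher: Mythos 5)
Your proposal is correct and follows essentially the same route as the paper's proof: linearity of the conditional expectation, the observation that each deferral loss $\ell(h, x, n+j)$ is constant in the random label $y$ so only the coefficient $1 - c_j(x,y)$ is averaged, collapsing it to $q(x, n+j)$ by definition, and then obtaining the regret by subtracting the infimum over $\sH$. No gaps.
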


\begin{proof}
By definition, $\sC_{\lsc}(h, x)$ is the conditional-$\lsc$ loss can
be expressed as follows:
\begin{equation}
\label{eq:cond-surrogate}
\begin{aligned}
  \sC_{\lsc}(h, x)
  & = \E_y
  \bracket*{\lsc(h, x, y) }\\
  & = \E_y
  \bracket*{\ell \paren*{h, x, y} } +\sum_{j = 1}^{\num}\E_{y | x}
  \bracket*{\paren*{1 - c_j(x, y)} }\ell\paren*{h, x, n + j}\\  
  & = \sum_{y \in \sY} q(x, y) \ell \paren*{h, x, y} + \sum_{j = 1}^{\num}  q(x,n+j)  \ell\paren*{h, x, n + j}\\  
  & = \sum_{y\in [n + \num]} q(x, y)  \ell(h, x, y),
\end{aligned}
\end{equation}
which ends the proof.
\end{proof}

\subsection{Conditional regret of zero-one loss}

We will also make use of the following result for the zero-one loss
$\ell_{0-1}(h, x, y) = \1_{\hh(x) \neq y}$ with label space $[n+\num]$
and the conditional probability vector $\ov q(x,\cdot)$, which
characterizes the minimal conditional $\ell_{0-1}$-loss and the
corresponding calibration gap
\citep[Lemma~3]{AwasthiMaoMohriZhong2022multi}.

\begin{restatable}{lemma}{ExplicitAssumptionQ}
\label{lemma:explicit_assumption_01_q}
For any $x \in \sX$,
the minimal conditional $\ell_{0-1}$-loss and
the calibration gap for $\ell_{0-1}$ can be expressed as follows:
\begin{align*}
\sC^*_{\ell_{0-1}}(x) & = 1 - \max_{y\in \mathsf H(x)} \ov q(x, y)\\
\Delta\sC_{\ell_{0-1}}(h,x) & = \max_{y\in \mathsf H(x)} \ov q(x, y) - \ov q(x,\hh(x)).
\end{align*}
\end{restatable}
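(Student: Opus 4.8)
The plan is to carry out a direct pointwise computation, essentially mirroring the argument used for Lemma~\ref{lemma:calibration_gap_score}, but now with the zero-one loss $\ell_{0-1}(h,x,y) = \1_{\hh(x) \neq y}$ and the normalized conditional distribution $\ov q(x, \cdot)$ in place of the deferral loss and $q(x,\cdot)$. First I would expand the conditional loss using the fact that, by construction, $\ov q(x, \cdot)$ is a genuine probability distribution over $[n + \num]$, i.e.\ $\sum_{y \in [n + \num]} \ov q(x, y) = 1$. Writing $\1_{\hh(x) \neq y} = 1 - \1_{\hh(x) = y}$, the conditional $\ell_{0-1}$-loss reduces to
\begin{equation*}
\sC_{\ell_{0-1}}(h, x) = \sum_{y \in [n + \num]} \ov q(x, y) \1_{\hh(x) \neq y} = 1 - \ov q(x, \hh(x)).
\end{equation*}

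Next, I would take the infimum over $h \in \sH$. Since the only dependence on $h$ is through the predicted label $\hh(x)$, and since by definition $\mathsf H(x) = \curl*{\hh(x) \colon h \in \sH}$ is exactly the set of labels realizable at $x$, minimizing $1 - \ov q(x, \hh(x))$ over $\sH$ is the same as maximizing $\ov q(x, y)$ over $y \in \mathsf H(x)$. This yields $\sC^*_{\ell_{0-1}}(x) = 1 - \max_{y \in \mathsf H(x)} \ov q(x, y)$, with the maximum attained because $\mathsf H(x) \subseteq [n + \num]$ is finite. Subtracting then gives the calibration gap
\begin{equation*}
\Delta \sC_{\ell_{0-1}}(h, x) = \sC_{\ell_{0-1}}(h, x) - \sC^*_{\ell_{0-1}}(x) = \max_{y \in \mathsf H(x)} \ov q(x, y) - \ov q(x, \hh(x)).
\end{equation*}

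The computation is short, so there is no serious obstacle; the only points requiring care are bookkeeping ones. The crucial one is that the expression $1 - \ov q(x, \hh(x))$ relies on $\ov q(x, \cdot)$ summing to one, which is precisely why the normalization $\ov q = q / Q$ (rather than the unnormalized $q$) must be used here — this is the structural difference from Lemma~\ref{lemma:calibration_gap_score}, whose deferral loss produced $1 - q(x, \hh(x))$ without normalization. One should also confirm that the tie-breaking rule defining $\hh(x)$ is consistent with that defining $y_{\max} = \argmax_{y \in [n + \num]} q(x, y)$, so that the realized maximizer is well defined. Indeed the statement is exactly \citep[Lemma~3]{AwasthiMaoMohriZhong2022multi} specialized to the label space $[n + \num]$ with conditional probabilities $\ov q(x, \cdot)$, and the argument above recovers it directly.
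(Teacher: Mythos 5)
Your proof is correct. Note, however, that the paper does not actually prove this lemma: it invokes it as a known result, citing Lemma~3 of \citet{AwasthiMaoMohriZhong2022multi} applied to the label space $[n+\num]$ with the conditional probability vector $\ov q(x,\cdot)$. Your direct computation — writing $\sC_{\ell_{0-1}}(h,x) = \sum_{y \in [n+\num]} \ov q(x,y)\1_{\hh(x)\neq y} = 1 - \ov q(x,\hh(x))$ via the normalization $\sum_y \ov q(x,y) = 1$, then taking the infimum over $h$ through the set of realizable labels $\mathsf H(x)$, and subtracting — is precisely the standard argument underlying that cited result, so in effect you have filled in the citation with a self-contained proof. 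It also mirrors, as you note, the paper's own proof of Lemma~\ref{lemma:calibration_gap_score}, and your remark about where normalization matters is exactly the right structural point: the deferral conditional loss comes out as $1 - q(x,\hh(x))$ with the unnormalized $q$ because the cost terms there absorb the discrepancy, whereas for the zero-one loss the identity $\1_{\hh(x)\neq y} = 1 - \1_{\hh(x)=y}$ only collapses to $1 - \ov q(x,\hh(x))$ when the weights sum to one. The tie-breaking consistency you flag is indeed the only other bookkeeping point, and it is handled in the paper by fixing the same deterministic tie-breaking rule for $\hh(x)$ and $y_{\max}$.
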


\subsection{Proof of \texorpdfstring{$\sH$}{H}-consistency bounds
  for  deferral surrogate losses (Theorem~\ref{Thm:bound-score})}
\label{app:score}

\BoundScore*
\begin{proof}
We denote the normalization factor as $Q=\sum_{y\in [n+\num]}q(x,
y)=\num+1-\E_y \bracket*{c_j(x, y)}$, which is a constant that ensures
the sum of $\ov q(x, y)=\frac{q(x, y)}{Q}$ is equal to 1.  By
Lemma~\ref{lemma:calibration_gap_score}, the calibration gap of
$\ldefsc$ can be expressed and upper-bounded as follows:
\begin{align*}
\Delta \sC_{\ldefsc}(h, x)
& = \max_{y\in \mathsf H(x)} q(x, y) - q(x, \hh(x)) \tag{Lemma~\ref{lemma:calibration_gap_score}}\\
& = Q \paren*{\max_{y\in \mathsf H(x)} \ov q(x, y) -  \ov q(x,  \hh(x))}\\
& = Q \Delta \sC_{\ell_{0-1}}(h, x)\tag{Lemma~\ref{lemma:explicit_assumption_01_q}}\\
& \leq Q\Gamma\paren*{\Delta \sC_{ \ell, \sH}(h, x)} \tag{$\sH$-consistency
bound of $ \ell$}\\
& = Q\Gamma\bigg(\sum_{y\in [n + \num]} \ov q(x, y)  \ell(h,x, y) -\inf_{h \in \sH}\sum_{y\in [n + \num]} \ov q(x, y)  \ell(h,x, y)\bigg) \\
& = Q\Gamma\bigg(\sum_{y\in [n + \num]} \frac{q(x, y)}{Q}  \ell(h,x, y) -\inf_{h \in \sH}\sum_{y\in [n + \num]} \frac{q(x, y)}{Q}  \ell(h,x, y)\bigg) \\
& = Q\Gamma\paren*{\frac{1}{Q}\Delta \sC_{\lsc}(h, x)} \tag{Lemma~\ref{lemma:surrogate_calibration_gap_score}}.
\end{align*}
Thus, taking expectations gives:
\begin{align*}
\sE_{\ldefsc}(h) - \sE_{\ldefsc}^*( \sH) + \sM_{\ldefsc}( \sH)
& = \E_{X}\bracket*{\Delta \sC_{\ldefsc}(h, x)}\\
& \leq \E_X\bracket*{Q\Gamma\paren*{\frac{1}{Q}\Delta \sC_{\lsc}(h, x)}}\\
& \leq Q \Gamma\paren*{\frac{1}{Q} \E_X\bracket*{\Delta \sC_{\lsc}(h, x)}}
\tag{concavity of $\Gamma$ and Jensen's ineq.}\\
& = Q \Gamma\paren*{\frac{\sE_{\lsc}(h)-\sE_{\lsc}^*( \sH) + \sM_{\lsc}( \sH)}{Q}}\\
& = \paren*{\num+1-\E_y
  \bracket*{c_j(x, y)}} \Gamma\paren*{\frac{\sE_{\lsc}(h) - \sE_{\lsc}^*( \sH) + \sM_{\lsc}( \sH)}{\num+1-\E_y
  \bracket*{c_j(x, y)}}}\\
&\leq \paren*{\num + 1-\sum_{j = 1}^{\num}\uv c_j} \Gamma\paren*{\frac{\sE_{\lsc}(h) - \sE_{\lsc}^*( \sH) + \sM_{\lsc}( \sH)}{\num + 1 - \sum_{j = 1}^{\num}\ov c_j}}
\tag{$\uv c_j \leq c_j(x, y)\leq \ov c_j, \forall j\in [\num]$}
\end{align*}
and $\sE_{\ldefsc}(h) - \sE_{\ldefsc}^*( \sH) + \sM_{\ldefsc}( \sH)\leq \Gamma\paren*{\sE_{\lsc}(h) - \sE_{\lsc}^*( \sH) + \sM_{\lsc}( \sH)}$ when $\Gamma$ is linear, which completes the proof.
\end{proof}

\section{Examples of deferral surrogate losses and
  their \texorpdfstring{$\sH$}{H}-consistency bounds}
\label{app:sur-score-example}
\subsection{\texorpdfstring{$\ell$}{ell} being adopted as comp-sum losses}
\label{app:sur-score-example-comp}
\paragraph{Example: $\ell=\ell_{\rm{exp}}$.} Plug in $\ell=\ell_{\rm{exp}}=\sum_{y'\neq y} e^{h(x, y') - h(x, y)}$ in \eqref{eq:sur-score}, we obtain
\begin{align*}
\sfL = \sum_{y'\neq y} e^{h(x, y') - h(x, y)} +\sum_{j=1}^{\num}(1-c_j(x,y))\sum_{y'\neq n+j} e^{h(x, y') - h(x, n+j)}.
\end{align*}
By \citet[Theorem~1]{mao2023cross}, $\ell_{\rm{exp}}$  admits an $\sH$-consistency bound with respect to $\ell_{0-1}$ with $\Gamma(t)=\sqrt{2t}$, using Corollary~\ref{cor:bound-score}, we obtain
\begin{equation*}
\sE_{\ldefsc}(h) - \sE_{\ldefsc}^*( \sH)
\leq \sqrt{2}\paren[\bigg]{\num + 1 - \sum_{j = 1}^{\num}\uv c_j} \paren*{\frac{\sE_{\lsc}(h) - \sE_{\lsc}^*( \sH)}{\num + 1-\sum_{j = 1}^{\num}\ov c_j}}^{\frac12}.
\end{equation*}
Since $1 \leq \num + 1 - \sum_{j = 1}^{\num}\ov
c_j\leq \num + 1 - \sum_{j = 1}^{\num}\uv c_j\leq \num + 1$, the bound can be simplified as
\begin{equation*}
\sE_{\ldefsc}(h) - \sE_{\ldefsc}^*( \sH)
\leq \sqrt{2}(\num+1)\paren*{\sE_{\lsc}(h) - \sE_{\lsc}^*( \sH)}^{\frac12}.
\end{equation*}

\paragraph{Example: $\ell=\ell_{\rm{log}}$.} Plug in $\ell=\ell_{\rm{log}}=- \log \bracket*{\frac{e^{h(x,y)}}{\sum_{y' \in \ov\sY} e^{h(x,y')}}}$ in \eqref{eq:sur-score}, we obtain
\begin{align*}
\sfL = -\log\paren*{\frac{e^{h(x,y)}}{\sum_{y'\in \ov \sY}e^{h(x,y')}}} -\sum_{j=1}^{\num}(1-c_j(x,y))\log\paren*{\frac{e^{h(x,n+j)}}{\sum_{y'\in \ov \sY}e^{h(x,y')}}}.
\end{align*}
By \citet[Theorem~1]{mao2023cross}, $\ell_{\rm{log}}$  admits an $\sH$-consistency bound with respect to $\ell_{0-1}$ with $\Gamma(t)=\sqrt{2t}$, using Corollary~\ref{cor:bound-score}, we obtain
\begin{equation*}
\sE_{\ldefsc}(h) - \sE_{\ldefsc}^*( \sH)
\leq \sqrt{2}\paren[\bigg]{\num + 1 - \sum_{j = 1}^{\num}\uv c_j} \paren*{\frac{\sE_{\lsc}(h) - \sE_{\lsc}^*( \sH)}{\num + 1-\sum_{j = 1}^{\num}\ov c_j}}^{\frac12}.
\end{equation*}
Since $1 \leq \num + 1 - \sum_{j = 1}^{\num}\ov
c_j\leq \num + 1 - \sum_{j = 1}^{\num}\uv c_j\leq \num + 1$, the bound can be simplified as
\begin{equation*}
\sE_{\ldefsc}(h) - \sE_{\ldefsc}^*( \sH)
\leq \sqrt{2}(\num+1)\paren*{\sE_{\lsc}(h) - \sE_{\lsc}^*( \sH)}^{\frac12}.
\end{equation*}

\paragraph{Example: $\ell=\ell_{\rm{gce}}$.} Plug in $\ell=\ell_{\rm{gce}}==\frac{1}{\alpha}\bracket*{1 - \bracket*{\frac{e^{h(x,y)}}
    {\sum_{y'\in \ov\sY} e^{h(x,y')}}}^{\alpha}}$ in \eqref{eq:sur-score}, we obtain
\begin{align*}
\sfL = \frac{1}{\alpha}\bracket*{1 - \bracket*{\frac{e^{h(x,y)}}
    {\sum_{y'\in \ov \sY} e^{h(x,y')}}}^{\alpha}} +\frac{1}{\alpha}\sum_{j=1}^{\num}(1-c_j(x,y))\bracket*{1 - \bracket*{\frac{e^{h(x,n+j)}}
    {\sum_{y'\in \ov \sY} e^{h(x,y')}}}^{\alpha}}.
\end{align*}
By \citet[Theorem~1]{mao2023cross}, $\ell_{\rm{gce}}$  admits an $\sH$-consistency bound with respect to $\ell_{0-1}$ with $\Gamma(t)=\sqrt{2n^{\alpha}t}$, using Corollary~\ref{cor:bound-score}, we obtain
\begin{equation*}
\sE_{\ldefsc}(h) - \sE_{\ldefsc}^*( \sH)
\leq \sqrt{2n^{
\alpha}}\paren[\bigg]{\num + 1 - \sum_{j = 1}^{\num}\uv c_j} \paren*{\frac{\sE_{\lsc}(h) - \sE_{\lsc}^*( \sH)}{\num + 1-\sum_{j = 1}^{\num}\ov c_j}}^{\frac12}.
\end{equation*}
Since $1 \leq \num + 1 - \sum_{j = 1}^{\num}\ov
c_j\leq \num + 1 - \sum_{j = 1}^{\num}\uv c_j\leq \num + 1$, the bound can be simplified as
\begin{equation*}
\sE_{\ldefsc}(h) - \sE_{\ldefsc}^*( \sH)
\leq \sqrt{2n^{
\alpha}}(\num+1)\paren*{\sE_{\lsc}(h) - \sE_{\lsc}^*( \sH)}^{\frac12}.
\end{equation*}

\paragraph{Example: $\ell=\ell_{\rm{mae}}$.} Plug in $\ell=\ell_{\rm{mae}}=1 - \frac{e^{h(x,y)}}{\sum_{y'\in \sY} e^{h(x, y')}}$ in \eqref{eq:sur-score}, we obtain
\begin{align*}
\sfL = 1 - \frac{e^{h(x,y)}}{\sum_{y'\in \ov \sY} e^{h(x, y')}} +\sum_{j=1}^{\num}(1-c_j(x,y))\paren*{1 - \frac{e^{h(x,n+j)}}{\sum_{y'\in \ov \sY} e^{h(x, y')}}}.
\end{align*}
By \citet[Theorem~1]{mao2023cross}, $\ell_{\rm{mae}}$  admits an $\sH$-consistency bound with respect to $\ell_{0-1}$ with $\Gamma(t)=nt$, using Corollary~\ref{cor:bound-score}, we obtain
\begin{equation*}
\sE_{\ldefsc}(h) - \sE_{\ldefsc}^*( \sH)
\leq  n \paren*{\sE_{\lsc}(h) - \sE_{\lsc}^*( \sH)}.
\end{equation*}

\subsection{\texorpdfstring{$\ell$}{ell} being adopted as sum losses}
\label{app:sur-score-example-sum}
\paragraph{Example: $\ell=\Phi_{\mathrm{sq}}^{\mathrm{sum}}$.} Plug in $\ell=\Phi_{\mathrm{sq}}^{\mathrm{sum}}=\sum_{y'\neq y}\Phi_{\rm{sq}}\paren*{h(x,y)-h(x,y')}$ in \eqref{eq:sur-score}, we obtain
\begin{align*}
\sfL = \sum_{y'\neq y} \Phi_{\rm{sq}}\paren*{\Delta_h(x,y,y')} +\sum_{j=1}^{\num}(1-c_j(x,y))\sum_{y'\neq n+j} \Phi_{\rm{sq}}\paren*{\Delta_h(x,n+j,y')},
\end{align*}
where $\Delta_h(x,y,y')=h(x, y) - h(x, y')$ and $\Phi_{\mathrm{sq}}(t)=\max\curl*{0, 1 - t}^2$.
By \citet[Table~2]{AwasthiMaoMohriZhong2022multi}, $\Phi_{\mathrm{sq}}^{\mathrm{sum}}$ admits an $\sH$-consistency bound with respect to $\ell_{0-1}$ with $\Gamma(t)=\sqrt{t}$, using Corollary~\ref{cor:bound-score}, we obtain
\begin{equation*}
\sE_{\ldefsc}(h) - \sE_{\ldefsc}^*( \sH)
\leq \paren[\bigg]{\num + 1 - \sum_{j = 1}^{\num}\uv c_j} \paren*{\frac{\sE_{\lsc}(h) - \sE_{\lsc}^*( \sH)}{\num + 1-\sum_{j = 1}^{\num}\ov c_j}}^{\frac12}.
\end{equation*}
Since $1 \leq \num + 1 - \sum_{j = 1}^{\num}\ov
c_j\leq \num + 1 - \sum_{j = 1}^{\num}\uv c_j\leq \num + 1$, the bound can be simplified as
\begin{equation*}
\sE_{\ldefsc}(h) - \sE_{\ldefsc}^*( \sH)
\leq (\num+1)\paren*{\sE_{\lsc}(h) - \sE_{\lsc}^*( \sH)}^{\frac12}.
\end{equation*}

\paragraph{Example: $\ell=\Phi_{\mathrm{exp}}^{\mathrm{sum}}$.} Plug in $\ell=\Phi_{\mathrm{exp}}^{\mathrm{sum}}=\sum_{y'\neq y}\Phi_{\rm{exp}}\paren*{h(x,y)-h(x,y')}$ in \eqref{eq:sur-score}, we obtain
\begin{align*}
\sfL = \sum_{y'\neq y} \Phi_{\rm{exp}}\paren*{\Delta_h(x,y,y')} +\sum_{j=1}^{\num}(1-c_j(x,y))\sum_{y'\neq n+j} \Phi_{\rm{exp}}\paren*{\Delta_h(x,n+j,y')},
\end{align*}
where $\Delta_h(x,y,y')=h(x, y) - h(x, y')$ and $\Phi_{\mathrm{exp}}(t)=e^{-t}$.
By \citet[Table~2]{AwasthiMaoMohriZhong2022multi}, $\Phi_{\mathrm{exp}}^{\mathrm{sum}}$ admits an $\sH$-consistency bound with respect to $\ell_{0-1}$ with $\Gamma(t)=\sqrt{2t}$, using Corollary~\ref{cor:bound-score}, we obtain
\begin{equation*}
\sE_{\ldefsc}(h) - \sE_{\ldefsc}^*( \sH)
\leq \sqrt{2}\paren[\bigg]{\num + 1 - \sum_{j = 1}^{\num}\uv c_j} \paren*{\frac{\sE_{\lsc}(h) - \sE_{\lsc}^*( \sH)}{\num + 1-\sum_{j = 1}^{\num}\ov c_j}}^{\frac12}.
\end{equation*}
Since $1 \leq \num + 1 - \sum_{j = 1}^{\num}\ov
c_j\leq \num + 1 - \sum_{j = 1}^{\num}\uv c_j\leq \num + 1$, the bound can be simplified as
\begin{equation*}
\sE_{\ldefsc}(h) - \sE_{\ldefsc}^*( \sH)
\leq \sqrt{2}(\num+1)\paren*{\sE_{\lsc}(h) - \sE_{\lsc}^*( \sH)}^{\frac12}.
\end{equation*}

\paragraph{Example: $\ell=\Phi_{\rho}^{\mathrm{sum}}$.} Plug in $\ell=\Phi_{\rho}^{\mathrm{sum}}=\sum_{y'\neq y}\Phi_{\rho}\paren*{h(x,y)-h(x,y')}$ in \eqref{eq:sur-score}, we obtain
\begin{align*}
\sfL = \sum_{y'\neq y} \Phi_{\rho}\paren*{\Delta_h(x,y,y')} +\sum_{j=1}^{\num}(1-c_j(x,y))\sum_{y'\neq n+j} \Phi_{\rho}\paren*{\Delta_h(x,n+j,y')},
\end{align*}
where $\Delta_h(x,y,y')=h(x, y) - h(x, y')$ and $\Phi_{\rho}(t)=\min\curl*{\max\curl*{0,1 - t/\rho},1}$.
By \citet[Table~2]{AwasthiMaoMohriZhong2022multi}, $\Phi_{\rho}^{\mathrm{sum}}$ admits an $\sH$-consistency bound with respect to $\ell_{0-1}$ with $\Gamma(t)=t$, using Corollary~\ref{cor:bound-score}, we obtain
\begin{equation*}
\sE_{\ldefsc}(h) - \sE_{\ldefsc}^*( \sH)
\leq \sE_{\lsc}(h) - \sE_{\lsc}^*( \sH).
\end{equation*}

\subsection{\texorpdfstring{$\ell$}{ell} being adopted as constrained losses}
\label{app:sur-score-example-cstnd}
\paragraph{Example: $\ell=\Phi_{\mathrm{hinge}}^{\mathrm{cstnd}}$.} Plug in $\ell=\Phi_{\mathrm{hinge}}^{\mathrm{cstnd}}=\sum_{y'\neq y}\Phi_{\mathrm{hinge}}\paren*{-h(x, y')}$ in \eqref{eq:sur-score}, we obtain
\begin{align*}
\sfL = \sum_{y'\neq y}\Phi_{\mathrm{hinge}}\paren*{-h(x, y')} +\sum_{j=1}^{\num}(1-c_j(x,y))\sum_{y'\neq n+j}\Phi_{\mathrm{hinge}}\paren*{-h(x, y')},
\end{align*}
where $\Phi_{\mathrm{hinge}}(t) = \max\curl*{0,1 - t}$ with the constraint that $\sum_{y\in \sY}h(x,y)=0$.
By \citet[Table~3]{AwasthiMaoMohriZhong2022multi}, $\Phi_{\mathrm{hinge}}^{\mathrm{cstnd}}$ admits an $\sH$-consistency bound with respect to $\ell_{0-1}$ with $\Gamma(t)=t$, using Corollary~\ref{cor:bound-score}, we obtain
\begin{equation*}
\sE_{\ldefsc}(h) - \sE_{\ldefsc}^*( \sH)
\leq \sE_{\lsc}(h) - \sE_{\lsc}^*( \sH).
\end{equation*}

\paragraph{Example: $\ell=\Phi_{\mathrm{sq}}^{\mathrm{cstnd}}$.} Plug in $\ell=\Phi_{\mathrm{sq}}^{\mathrm{cstnd}}=\sum_{y'\neq y}\Phi_{\mathrm{sq}}\paren*{-h(x, y')}$ in \eqref{eq:sur-score}, we obtain
\begin{align*}
\sfL = \sum_{y'\neq y}\Phi_{\mathrm{sq}}\paren*{-h(x, y')} +\sum_{j=1}^{\num}(1-c_j(x,y))\sum_{y'\neq n+j}\Phi_{\mathrm{sq}}\paren*{-h(x, y')},
\end{align*}
where $\Phi_{\mathrm{sq}}(t) = \max\curl*{0, 1 - t}^2$ with the constraint that $\sum_{y\in \sY}h(x,y)=0$.
By \citet[Table~3]{AwasthiMaoMohriZhong2022multi}, $\Phi_{\mathrm{sq}}^{\mathrm{cstnd}}$ admits an $\sH$-consistency bound with respect to $\ell_{0-1}$ with $\Gamma(t)=\sqrt{t}$, using Corollary~\ref{cor:bound-score}, we obtain
\begin{equation*}
\sE_{\ldefsc}(h) - \sE_{\ldefsc}^*( \sH)
\leq \paren[\bigg]{\num + 1 - \sum_{j = 1}^{\num}\uv c_j} \paren*{\frac{\sE_{\lsc}(h) - \sE_{\lsc}^*( \sH)}{\num + 1-\sum_{j = 1}^{\num}\ov c_j}}^{\frac12}.
\end{equation*}
Since $1 \leq \num + 1 - \sum_{j = 1}^{\num}\ov
c_j\leq \num + 1 - \sum_{j = 1}^{\num}\uv c_j\leq \num + 1$, the bound can be simplified as
\begin{equation*}
\sE_{\ldefsc}(h) - \sE_{\ldefsc}^*( \sH)
\leq (\num+1)\paren*{\sE_{\lsc}(h) - \sE_{\lsc}^*( \sH)}^{\frac12}.
\end{equation*}

\paragraph{Example: $\ell=\Phi_{\mathrm{exp}}^{\mathrm{cstnd}}$.} Plug in $\ell=\Phi_{\mathrm{exp}}^{\mathrm{cstnd}}=\sum_{y'\neq y}\Phi_{\mathrm{exp}}\paren*{-h(x, y')}$ in \eqref{eq:sur-score}, we obtain
\begin{align*}
\sfL
 =\sum_{y'\neq y}\Phi_{\mathrm{exp}}\paren*{-h(x, y')} +\sum_{j=1}^{\num}(1-c_j(x,y))\sum_{y'\neq n+j}\Phi_{\mathrm{exp}}\paren*{-h(x, y')},
\end{align*}
where $\Phi_{\mathrm{exp}}(t)=e^{-t}$ with the constraint that $\sum_{y\in \sY}h(x,y)=0$.
By \citet[Table~3]{AwasthiMaoMohriZhong2022multi}, $\Phi_{\mathrm{exp}}^{\mathrm{cstnd}}$ admits an $\sH$-consistency bound with respect to $\ell_{0-1}$ with $\Gamma(t)=\sqrt{2t}$, using Corollary~\ref{cor:bound-score}, we obtain
\begin{equation*}
\sE_{\ldefsc}(h) - \sE_{\ldefsc}^*( \sH)
\leq \sqrt{2}\paren[\bigg]{\num + 1 - \sum_{j = 1}^{\num}\uv c_j} \paren*{\frac{\sE_{\lsc}(h) - \sE_{\lsc}^*( \sH)}{\num + 1-\sum_{j = 1}^{\num}\ov c_j}}^{\frac12}.
\end{equation*}
Since $1 \leq \num + 1 - \sum_{j = 1}^{\num}\ov
c_j\leq \num + 1 - \sum_{j = 1}^{\num}\uv c_j\leq \num + 1$, the bound can be simplified as
\begin{equation*}
\sE_{\ldefsc}(h) - \sE_{\ldefsc}^*( \sH)
\leq \sqrt{2}(\num+1)\paren*{\sE_{\lsc}(h) - \sE_{\lsc}^*( \sH)}^{\frac12}.
\end{equation*}

\paragraph{Example: $\ell=\Phi_{\rho}^{\mathrm{cstnd}}$.} Plug in $\ell=\Phi_{\rho}^{\mathrm{cstnd}}=\sum_{y'\neq y}\Phi_{\rho}\paren*{-h(x, y')}$ in \eqref{eq:sur-score}, we obtain
\begin{align*}
\sfL =\sum_{y'\neq y}\Phi_{\rho}\paren*{-h(x, y')} +\sum_{j=1}^{\num}(1-c_j(x,y))\sum_{y'\neq n+j}\Phi_{\rho}\paren*{-h(x, y')},
\end{align*}
where $\Phi_{\rho}(t)=\min\curl*{\max\curl*{0,1 - t/\rho},1}$ with the constraint that $\sum_{y\in \sY}h(x,y)=0$.
By \citet[Table~3]{AwasthiMaoMohriZhong2022multi}, $\Phi_{\rho}^{\mathrm{cstnd}}$ admits an $\sH$-consistency bound with respect to $\ell_{0-1}$ with $\Gamma(t)=t$, using Corollary~\ref{cor:bound-score}, we obtain
\begin{equation*}
\sE_{\ldefsc}(h) - \sE_{\ldefsc}^*( \sH)
\leq \sE_{\lsc}(h) - \sE_{\lsc}^*( \sH).
\end{equation*}

\section{Proof of learning bounds
  for deferral surrogate losses (Theorem~\ref{Thm:Gbound-score})}
\label{app:Gbound-score}
\GBoundScore*
\begin{proof}
  By using the standard Rademacher complexity bounds \citep{MohriRostamizadehTalwalkar2018}, for any $\delta>0$,
  with probability at least $1 - \delta$, the following holds for all $h \in \sH$:
\[
\abs*{\sE_{\sfL}(h) - \h\sE_{\sfL,S}(h)}
\leq 2 \Rad_m^{\sfL}(\sH) +
B_{\sfL} \sqrt{\tfrac{\log (2/\delta)}{2m}}.
\]
Fix $\e > 0$. By the definition of the infimum, there exists $h^* \in
\sH$ such that $\sE_{\sfL}(h^*) \leq
\sE_{\sfL}^*(\sH) + \e$. By definition of
$\h h_S$, we have
\begin{align*}
  \sE_{\sfL}(\h h_S) - \sE_{\sfL}^*(\sH)
  & = \sE_{\sfL}(\h h_S) - \h\sE_{\sfL,S}(\h h_S) + \h\sE_{\sfL,S}(\h h_S) - \sE_{\sfL}^*(\sH)\\
  & \leq \sE_{\sfL}(\h h_S) - \h\sE_{\sfL,S}(\h h_S) + \h\sE_{\sfL,S}(h^*) - \sE_{\sfL}^*(\sH)\\
  & \leq \sE_{\sfL}(\h h_S) - \h\sE_{\sfL,S}(\h h_S) + \h\sE_{\sfL,S}(h^*) - \sE_{\sfL}^*(h^*) + \e\\
  & \leq
  2 \bracket*{2 \Rad_m^{\sfL}(\sH) +
B_{\sfL} \sqrt{\tfrac{\log (2/\delta)}{2m}}} + \e.
\end{align*}
Since the inequality holds for all $\e > 0$, it implies:
\[
\sE_{\sfL}(\h h_S) - \sE_{\sfL}^*(\sH)
\leq 
4 \Rad_m^{\sfL}(\sH) +
2 B_{\sfL} \sqrt{\tfrac{\log (2/\delta)}{2m}}.
\]
Plugging in this inequality in the bound
\eqref{eq:H-consistency-bounds} completes the proof.
\end{proof}

\end{document}